%% file: vwv.tex
\documentclass[11pt]{article}
\input{preamble}
\input{palette}
\input{figs/vwv_numbers}
\input{figs/vwv_predicted}   
\graphicspath{{figs/}}

\title{\bf A Judge Should Know What Changed:\\[2pt]
\large Construct Validity for LLM-as-a-Judge Evaluation}

\input{authors}

\begin{document}
\maketitle

\begin{abstract}
\noindent
An evaluator is a measurement instrument, and an instrument can be reliable without being valid.
The judge literature has established the first failure thoroughly: judges are inconsistent, and
they move on surface properties (length, format, register, injected metadata) that should not
change a verdict. The more basic property is untested. A valid judge must satisfy two conditions,
not one: it must \emph{not} change its verdict under an edit that preserves the construct, and it
\emph{must} change its verdict under a minimal edit that changes the construct. We formalise the
pair as a two-dimensional profile (invariance $\Sinv$, construct sensitivity $\Rsens$) and prove
that the coordinates are independent (Proposition~\ref{prop:indep}), that $\Rsens$ is
uninterpretable without $\Sinv$ (Proposition~\ref{prop:degenerate}), and that the profile admits
no total order, so a scalar summary necessarily discards a comparison a reader may need
(Proposition~\ref{prop:frontier}). We then measure it, over \vnv{profJudges}\ judges and
\vnv{profDomains}\ domains, against minimal interventions along \vnv{nEditSlots}\ construct slots
and \vnv{nControlSlots}\ register-only controls, with direction established by \emph{human}
annotators rather than by any model at a pre-registered agreement bar of
\vnv{humanAgreementMin}, and generation, verification and judging drawn from three disjoint model
families. Every profile is read off a
frontier we obtain by eliciting a graded verdict and cutting it ourselves, so all judges are
compared on identical terms whatever their vendor exposes. At matched invariance
$\Sinv\ge\vnv{profSinvTarget}$ judges sit at $\Sinv=\vnv{profSinvMean}$ against
$\Rsens=\vnv{profRsensMean}$: reliable, and markedly less responsive to construct change than
their reliability suggests.

The failure is structured, and the structure is the result. Overclaiming decomposes into
\emph{scope} (how many situations a claim covers) and \emph{strength} (how hard it commits to
each), a distinction the nearest works reach and then deliberately unify, theoretically
\citep{jiang2025conformal} and empirically \citep{rigourate2026}. We treat the unification as a
hypothesis with a stated refutation condition, and it does not survive. Judge sensitivity to the
two axes differs by \vnv{profGapMean}\ at matched invariance, \textbf{with the same sign on all
\vnv{profGapN}\ judges} across vendors, parameter scales and reasoning modes; a bootstrap over
base items excludes zero, an axis-label permutation null gives \vnv{gapPerm}, and the gap is
\emph{larger} on the subset where all \vnv{annRaters}\ annotators agreed. The one length bias we
can measure runs against the effect, so \vnv{gapObs}\ is a lower bound. Judges register that a
claim now covers more and largely miss that it now commits harder. The two axes further respond
with \emph{opposite} sign to accuracy pressure, which supplies a mechanism for the reported
paradox that prompting for accuracy increases overgeneralisation
\citep{peters2025generalization}: pooling the axes hides it.

Why is a gap this basic not in the record? Because the rulers leak. Across
\vnv{c4SetsAudited}\ public label sets, a frozen family reading only surface form, given
\emph{the same paired input the judge gets}, reproduces
\vnv{c4PairMin}--\vnv{c4PairMax}\ of their labels, including \vnv{c4PairSurfMtbench}\ of
\textsc{MT-Bench}'s \emph{human} votes. \textsc{RewardBench}'s length control is the sharpest
case: constraining chosen responses to be no longer than rejected ones did not remove the
confound but inverted it, leaving \emph{prefer the shorter response} correct on $59.7\%$ of
pairs. And whether a set leaks turns out to be a property of the set \emph{and the input mode}
together, which is why validation power is defined relative to a mode
(Definition~\ref{def:vp}). High judge agreement is therefore not evidence of valid evaluation.
We release the interventions, the human direction labels, the elicitation pilot, the diagnostic,
and a checklist.
\end{abstract}

\section{Introduction}
\label{sec:intro}

A judge is an instrument, and the field has spent two years establishing that this instrument is
noisy. Judges disagree with humans more than raw agreement suggests, they disagree with themselves
across reorderings, and they move on properties that carry no information about quality: response
length \citep{zheng2023judging}, list and emphasis formatting \citep{zhang2025formatbias}, the
order the candidates are shown in \citep{wang2023unfair}, injected metadata about the author
\citep{silentjudge2025}. Reward
models show the same pattern, well enough that benchmarks are now designed against it
\citep{lambert2024rewardbench,liu2024rmbench,huang2025posthoc}.

Every one of those results tests the same half of the same property. Each asks: the construct did
not change, so did the verdict stay put? That is \emph{invariance}, and failing it makes an
instrument unreliable. Passing it does not make an instrument valid. A thermometer welded to a
fixed reading is perfectly invariant to everything.

\begin{claimbox}
\noindent\textbf{The untested half.} A valid judge must satisfy two conditions. Under an edit that
\emph{preserves} the construct, the verdict must not change. Under a minimal edit that
\emph{changes} the construct, it must. Prior judge evaluation measures the first and, to our
knowledge, never the second. We measure both, as a profile $\Vprof(\Jud) = (\Sinv, \Rsens)$, and
report that judges are far more invariant than they are sensitive.
\end{claimbox}

\noindent This reframes what a high agreement number licenses. Reliability and validity come apart,
and a judge can be stable, unbiased on every surface property yet tested, and still fail to notice
when the thing it is supposed to measure has changed. That is not noise. It is the instrument
measuring something other than the construct.

\paragraph{The failure has structure, and the structure is the finding.} We develop the pair on a
construct where minimal edits are well posed: whether a scientific claim overreaches its evidence.
Overreaching decomposes into two operations that are not the same and have no reason to be handled
alike. \scopeax\ edits enlarge the set of situations the claim is about: widen the population,
extend to an adjacent domain, restate one observation as a standing property, strengthen a
quantifier. \strengthax\ edits leave that set alone and raise the commitment: delete the
conditions the effect was observed under, remove a hedge, add an intensifier the evidence does not
support. \emph{The method may improve performance in this basin} fails differently from
\emph{the method improves performance in all basins} and from \emph{the method substantially and
consistently improves performance}, and a judge can be blind to one while catching the other. We
find that it is.

That asymmetry pays a debt. \citet{peters2025generalization} report that prompting a model for
accuracy makes its overgeneralisation \emph{worse}. On one axis that is incoherent and reads as an
instruction-following quirk. On two it is expected: a demand for accuracy is a demand to sound
certain, which narrows \scopeax\ while amplifying \strengthax, because hedges and stated conditions
are precisely what reads as uncertainty. We rerun the manipulation with the axes scored separately.

\paragraph{Why a gap this basic is not already in the record.} Because the rulers cannot see it.
Consider two 2026 findings that look incompatible. A large judge audit reports verbosity bias as
negligible, under $\vnv{relVerbosityBias}$ across all $\vnv{relJudges}$ judges, over roughly
$\vnv{relJudgments}$ judgments \citep{reliability2026}. And a released epistemic-calibration
resource reports that on its own judge-calibration set, a rule reading nothing but response length
reproduces the labels \emph{better than the judge does} \citep{chen2026tridehall}. Both hold. The
leak is not in the judge but in the label set: when a label is inherited from the generative
condition that produced an item, and that condition also fixes surface form, the set is solvable
without the construct, and agreement with it rewards provenance rather than the construct. Such a
set has no headroom in which a sensitivity failure could show up. We measure that headroom.

\subsection{What is claimed, and on what evidence}
\label{sec:whatisclaimed}

\begin{itemize}[leftmargin=1.2em,itemsep=2pt]
\item \textbf{C1} (\S\ref{sec:validity}). Construct validity for an evaluator, formalised as the
  pair $(\Sinv,\Rsens)$ over two intervention classes, with an argument for why it must stay
  two-dimensional and what a validity frontier means. Conceptual; the vocabulary is not ours
  (\S\ref{sec:related}), the application to evaluators is.
\item \textbf{C2} (\S\ref{sec:probes}). The measured profile: \vnv{profJudges}\ judges $\times$ \vnv{profDomains}\ domains,
  against interventions whose direction is set by \emph{human} annotators. \emph{The headline.}
\item \textbf{C3} (\S\ref{sec:probes}). The \scopeax/\strengthax\ asymmetry, and its opposite
  response to accuracy pressure, which resolves \citet{peters2025generalization}'s paradox.
  Stated as a test of a published unification rather than as a new distinction
  (\S\ref{sec:collapse}); a null here is evidence \emph{for} that unification and is reported as
  such.
\item \textbf{C4} (\S\ref{sec:diagnostic}, \S\ref{sec:prevalence}). Validation power, defined
  relative to an \emph{input mode} (Definition~\ref{def:vp}, Corollary~\ref{cor:modes}), and
  measured on \vnv{c4SetsAudited}\ public sets in both modes. The mode is load-bearing: the same
  sets read as sound per item and as \vnv{c4PairMin}--\vnv{c4PairMax}\ recoverable paired.
  \emph{Needs no annotation of ours.} The per-axis form of the prediction is not testable on
  these sets and remains open.
\item \textbf{C5} (\S\ref{sec:consequence}). Against a blinded per-sentence replacement set, the
  inflation $\Delta$ in judge accuracy from validating on the inherited set.
\item \textbf{C6} (\S\ref{sec:checklist}). Released interventions, human direction labels,
  diagnostic, and checklist.
\end{itemize}

\noindent\textbf{What is not claimed.} The invariance / directional-expectation pair is
\citet{ribeiro2020checklist}'s, named for task models; we claim its systematic application to
evaluators. Shortcut exploitation is a mature literature on the model side
(\S\ref{sec:related}), and our control arm is a replication that exists to make the sensitivity
arm interpretable. \scopeax\ and \strengthax\ do not exhaust overreach; they are the two that can
be edited minimally and verified by humans. The two axes have been noticed before, twice, and both
times deliberately collapsed: we claim that the collapse costs something measurable, which is
narrower and more checkable than novelty (\S\ref{sec:collapse}). A judge insensitive on
\strengthax\ may be perfectly serviceable where strength is not the construct, and human labels
are not ground truth in general \citep{jacobs2021measurement}.

\begin{warnbox}
\noindent\textbf{Companion submission, disclosed.} A second submission by an overlapping author
set characterises the corpus that supplies our probe items and C4's case study
\citep{ecpcompanion}. It adopts this paper's decomposition and its construct-blind predictor
family, and reports the corpus-side measurements; this paper reports the judge-side ones. Neither
depends on the other's results, and the one quantity that would cross between them, a bound on
judge accuracy from a per-sentence gold set, is unrun in both and stated as such in both
(\S\ref{sec:consequence}).
\end{warnbox}

\begin{warnbox}
\noindent\textbf{Anti-circularity.}
An intervention belongs to the sensitivity arm only if it changes the correct verdict, and if a
model decides that, then the experiment tests a model against itself. Direction is therefore set
by human annotators, at agreement $\ge \vnv{humanAgreementMin}$, and generation, verification,
and judging are three
disjoint model families. \S\ref{sec:direction} gives the protocol and reports what it discarded.
\end{warnbox}

\section{Construct validity for an evaluator}
\label{sec:validity}

\begin{figure}[t]
\centering
\includegraphics[width=\textwidth]{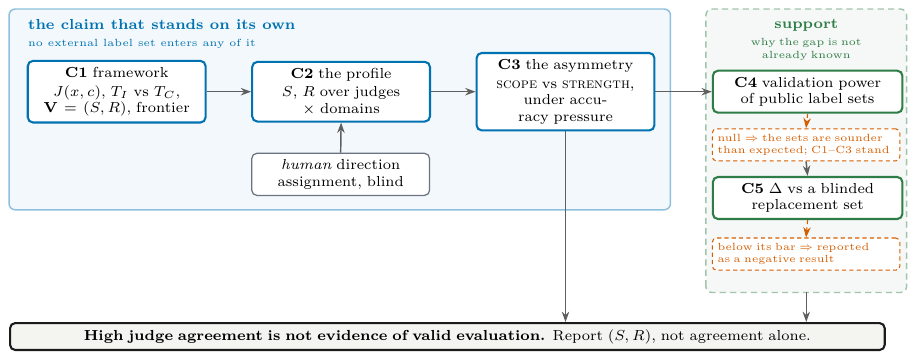}
\caption{The paper's claim structure, in two tiers. The left group is the claim that stands on
its own: no external label set enters C1, C2 or C3, and direction is assigned by humans under
Protocol~\ref{prot:direction}. The right group is support, and each of its stages carries the
consequence of its own null result in the dashed box beneath it. The independence is the
figure's shape rather than an assertion in this caption, and it is why the premise cannot
evaporate on one experiment (\S\ref{sec:whatisclaimed}).}
\label{fig:arch}
\end{figure}

\subsection{The object, and the two conditions}
\label{sec:object}

The vocabulary is borrowed and we use it in its original sense. \emph{Construct validity} is
the question of whether an instrument measures the theoretical construct it is claimed to
measure, as opposed to correlating with it \citep{cronbach1955construct}; the modern treatment
makes validity a property of an \emph{interpretation of scores} rather than of the instrument in
isolation \citep{messick1989validity}, and the measurement-theoretic reading insists that an
instrument's validity is a claim about a causal relation between the attribute and the score
\citep{borsboom2005measuring}. \citet{jacobs2021measurement} bring this vocabulary into machine
learning. What has been missing on the evaluator side is not the vocabulary but an operational
test, and the rest of this section is one.

Write an evaluator as a function
\begin{equation}
\Jud: \mathcal{X} \times \mathcal{C} \longrightarrow \mathcal{Y},
\qquad \Jud(x,\Crit) = y,
\label{eq:judge}
\end{equation}
mapping an item $x$ and an evaluation criterion $\Crit$ to a verdict $y$ in a finite, ordered
verdict space $\mathcal{Y}$. Fix $\Crit$ throughout: the question is not whether a judge
understands the criterion it was given, but how it responds to changes in $x$ under that
criterion. Let $y^{\star}(x,\Crit)$ denote the correct verdict, which for the constructs we
study is a fact about the relation between the claim and its evidence rather than about any
annotator's opinion, though establishing it in practice requires annotators, and
\S\ref{sec:direction} is about that gap.

An intervention is a map $T: \mathcal{X} \to \mathcal{X}$. Partition the interventions of
interest by their effect on the \emph{correct} verdict, not on the surface of $x$.

\begin{definition}[Intervention classes]
\label{def:classes}
$T$ is \emph{construct-preserving} for $\Crit$, written $T \in \Tinv$, if
$y^{\star}(T x,\Crit) = y^{\star}(x,\Crit)$ for all $x$ in the domain of interest. It is
\emph{construct-changing}, $T \in \Tcon$, if $y^{\star}(T x,\Crit) \neq y^{\star}(x,\Crit)$.
\end{definition}

\noindent Three consequences of defining the classes this way, all of which matter downstream.
Membership depends on $\Crit$: register normalisation is in $\Tinv$ when the criterion is claim
calibration and in $\Tcon$ when the criterion is tone. Membership is unrelated to edit
magnitude: a two-word quantifier change is in $\Tcon$ while a rewritten paragraph can be in
$\Tinv$, so the classes cannot be approximated by an edit-distance threshold. And membership is
not observable from $x$ alone; it is a claim about $y^{\star}$, which is why
Protocol~\ref{prot:direction} exists and why a pipeline that assigns it by model output is
circular.

\begin{definition}[Construct validity of an evaluator]
\label{def:validity}
$\Jud$ is \emph{construct-valid} for $\Crit$ on $(\Tinv,\Tcon)$ if
\begin{align}
\Jud(x,\Crit) &= \Jud(T x,\Crit) &&\text{for all } T \in \Tinv, \label{eq:inv}\\
\Jud(x,\Crit) &\neq \Jud(T x,\Crit) &&\text{for all } T \in \Tcon. \label{eq:sens}
\end{align}
\end{definition}

\begin{figure}[t]
\centering
\includegraphics[width=\textwidth]{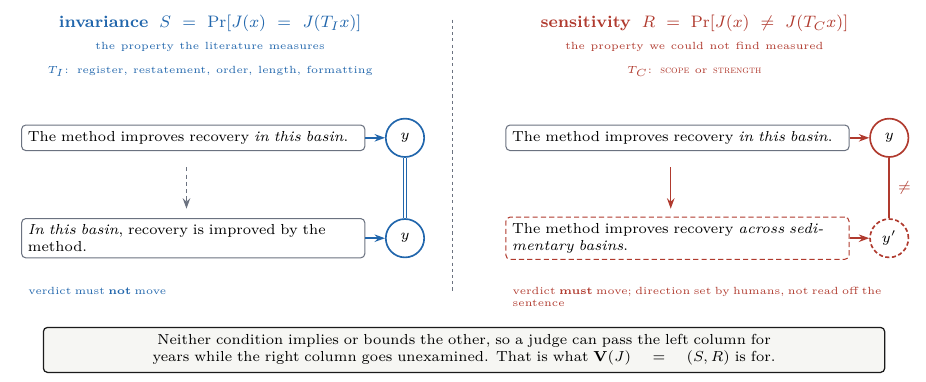}
\caption{The two conditions of Definition~\ref{def:validity} on one claim. Left: the property
the judge literature measures: an edit that changes register, restatement, clause order, length
or formatting must not move the verdict. Hedging is deliberately absent from that list, for the
reason in Appendix~\ref{app:interventions}. Right: the property we could not find measured: an edit that
changes what the claim covers or how hard it commits \emph{must} move it. The right column's
frame is drawn open because the correct verdict there is set by human annotation
(\S\ref{sec:direction}), not read off the sentence.}
\label{fig:twoconditions}
\end{figure}

\noindent Equation~\eqref{eq:inv} is the invariance property that judge-robustness work
measures under many names: consistency, bias, shortcut sensitivity. Equation~\eqref{eq:sens}
is its complement. Definition~\ref{def:validity} is not new as a \emph{form}: it is the
invariance / directional-expectation pair of \citet{ribeiro2020checklist}, stated for an
evaluator rather than for a task model. What is new is measuring the second half for evaluators
and finding that it does not follow from the first.

\begin{definition}[Validity profile]
\label{def:profile}
For a judge $\Jud$, an item distribution $\mathcal{D}$ over $\mathcal{X}$, and intervention
families $\Tinv, \Tcon$,
\begin{equation}
\Sinv(\Jud) \;=\; \Pr_{x\sim\mathcal{D},\,T\sim\Tinv}\!\big[\Jud(x)=\Jud(Tx)\big],
\qquad
\Rsens(\Jud) \;=\; \Pr_{x\sim\mathcal{D},\,T\sim\Tcon}\!\big[\Jud(x)\neq\Jud(Tx)\big],
\label{eq:SR}
\end{equation}
and the \emph{validity profile} is $\Vprof(\Jud) = (\Sinv,\Rsens) \in [0,1]^2$. Both are
properties of the triple (judge, distribution, family): a profile is reported with its families
named or it is not interpretable.
\end{definition}

\subsection{Why the profile is not a scalar}
\label{sec:noscalar}

A reviewer will ask for one number. The answer is that no scalar summary of
Eq.~\eqref{eq:SR} can distinguish a valid judge from a broken one, and the reason is structural
rather than a matter of losing nuance.

\begin{proposition}[The coordinates are independent]
\label{prop:indep}
For every $(s,r) \in [0,1]^2$ there exists a judge attaining $\Vprof = (s,r)$. In particular
neither coordinate constrains the other, and no inequality between them holds in general.
\end{proposition}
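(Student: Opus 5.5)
The proof is an explicit construction, and randomisation does most of the work. We fix the item distribution $\mathcal{D}$ and the two families $\Tinv,\Tcon$, and allow the judge to be any map $\Jud:\mathcal{X}\times\mathcal{C}\to\mathcal{Y}$ with $|\mathcal{Y}|\ge 2$. Since $\Crit$ is fixed, we suppress it. Because $\Sinv$ and $\Rsens$ are probabilities over a joint draw of $x$ and $T$, the natural plan is to build a judge whose verdict on an edited item $Tx$ depends on which class $T$ came from. We then tune the probability of a flip separately in each class.

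The first step is to reduce everything to flip probabilities. Pick two distinct verdicts $a,b\in\mathcal{Y}$. Define $\Jud(x)=a$ on every unedited item, and on edited items $Tx$ output $b$ with a prescribed probability. We flip with probability $1-s$ when the edit is construct-preserving and with probability $r$ when it is construct-changing. With this choice, Eq.~\eqref{eq:SR} gives $\Sinv=1-(1-s)=s$ and $\Rsens=r$ directly. If the judge is required to be deterministic, we replace the coin by a fixed measurable partition of the edited items whose $\mathcal{D}\times\Tinv$-mass is $1-s$ and whose $\mathcal{D}\times\Tcon$-mass is $r$.

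The obstacle is that a judge sees only the string $Tx$, not the label $T$. So we must ensure the construction is well defined: an edited item has to be attributable to one class. There are two ways to secure this. One is to take the stochastic-judge reading, conditioning on the sampling event, which is exactly how the profile is defined as a pair of separate expectations. The other is to exhibit a witness instance in which $\Tinv$-images, $\Tcon$-images and originals are pairwise disjoint. For example, a preserving edit can append a register marker and a changing edit can alter a quantifier, both on a single base item. Such an instance exists because, by Definition~\ref{def:classes}, $y^\star(Tx)\neq y^\star(x)$ for $T\in\Tcon$, so changing images are never fixed points. Since the proposition asserts existence, one such instance suffices. We also need atomless or sufficiently rich mass for arbitrary real $s,r$, or else we use randomisation; I would state the stochastic version as primary.

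Finally, the ``in particular'' clause follows immediately. The four corners $(0,0),(0,1),(1,0),(1,1)$ are all attained, so no inequality of the form $\Rsens\le f(\Sinv)$ or $\Rsens\ge f(\Sinv)$ holds for any nontrivial $f$. The most instructive corner is $(1,0)$, which is attained by the constant judge: it is the welded thermometer of \S\ref{sec:intro}.
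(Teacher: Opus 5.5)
This is the paper's proof: independent randomisation per intervention class, flipping with probability $1-s$ on $\Tinv$-images and $r$ on $\Tcon$-images. Your well-definedness discussion (disjoint images on a witness instance) makes explicit something the paper leaves implicit, namely that its judge acts on (base, intervention) pairs rather than on the string $Tx$. One small correction: draw the ``in particular'' clause from attainment of the whole square, not just the four corners, since the corners alone do not exclude a bound such as $\Rsens \le f(\Sinv)$ with $f(0)=f(1)=1$ and $f(\tfrac12)<1$.
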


\begin{proof}
Fix $\mathcal{Y} \supseteq \{0,1\}$. Construct $\Jud$ by independent randomisation on the two
families: on an input reached through $T \in \Tinv$, return the verdict assigned to the base
item with probability $s$ and the other verdict otherwise; on an input reached through
$T \in \Tcon$, return the other verdict with probability $r$ and the base verdict otherwise.
The two rules act on disjoint sets of (base, intervention) pairs, so the two probabilities in
Eq.~\eqref{eq:SR} are set independently and equal $s$ and $r$ respectively. Since $(s,r)$ was
arbitrary, the whole unit square is attained.
\end{proof}

\noindent Proposition~\ref{prop:indep} is elementary, and that is the point: it says the
literature's decade of invariance results places \emph{no} bound whatever on $\Rsens$. A field
could drive $\Sinv$ to $1$ across every known surface property and learn nothing about whether
its instruments track the construct.

\begin{proposition}[Both coordinates are individually gameable, in opposite directions]
\label{prop:degenerate}
Let $\Jud_{\mathrm{const}}(x,\Crit) = y_0$ for a fixed $y_0$, and let $\Jud_{\mathrm{flip}}$
return a verdict drawn to differ from the base verdict whenever the input was produced by any
intervention. Then $\Vprof(\Jud_{\mathrm{const}}) = (1,0)$ and
$\Vprof(\Jud_{\mathrm{flip}}) = (0,1)$. Neither reads $x$ in a way that depends on $\Crit$.
\end{proposition}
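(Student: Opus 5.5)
The argument is a direct computation from Definition~\ref{def:profile}. Substitute each degenerate judge into Eq.~\eqref{eq:SR} and read off the two probabilities. Neither computation needs Proposition~\ref{prop:indep}, though the flip judge can be seen as its corner case $(s,r)=(0,1)$. I will assume, as that proof does, that $\mathcal{Y}$ has at least two elements, so that ``a verdict different from the base verdict'' exists. I will also fix the convention that $\Jud_{\mathrm{flip}}$ returns its unmodified base verdict on un-intervened inputs, so that $\Jud(x)$ in Eq.~\eqref{eq:SR} is well defined.

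\textbf{Constant judge.} For every $x$ and every $T$, $\Jud_{\mathrm{const}}(x)=y_0=\Jud_{\mathrm{const}}(Tx)$. The event $\{\Jud(x)=\Jud(Tx)\}$ therefore holds identically under the $(\mathcal{D},\Tinv)$ sampling, so $\Sinv=1$. The event $\{\Jud(x)\neq\Jud(Tx)\}$ is empty under the $(\mathcal{D},\Tcon)$ sampling, so $\Rsens=0$. This holds for every $\mathcal{D}$ and every pair of families, which is worth stating because Definition~\ref{def:profile} makes the profile relative to that triple.

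\textbf{Flip judge.} By construction, whenever the input is $Tx$ for some intervention $T$, the returned verdict differs from $\Jud_{\mathrm{flip}}(x)$. It does so with probability one, whatever randomisation selects among the remaining verdicts. Hence $\Pr[\Jud(x)=\Jud(Tx)]=0$ over $\Tinv$, giving $\Sinv=0$, and $\Pr[\Jud(x)\neq\Jud(Tx)]=1$ over $\Tcon$, giving $\Rsens=1$.

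The last sentence of the proposition follows by inspection. $\Jud_{\mathrm{const}}$ ignores both $x$ and $\Crit$. $\Jud_{\mathrm{flip}}$ depends only on whether the input was produced by an intervention and on the base verdict. It never uses $\Crit$, and in particular it does not consult $y^{\star}$ or the class membership $T\in\Tinv$ versus $T\in\Tcon$, which by Definition~\ref{def:classes} is the only place $\Crit$ enters.

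\textbf{Main obstacle.} The only delicate step is that $\Jud_{\mathrm{flip}}$ is not a function of the text alone. It needs to know that its input is $Tx$ and what $x$ is, much as the construction in Proposition~\ref{prop:indep} does. I will make this explicit by treating the judge as acting on (base, intervention) pairs, which is the sampling unit in Eq.~\eqref{eq:SR}. Under that reading the argument is complete.

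If a purely text-level judge is wanted, the first repair I would try assumes intervened items are disjoint from base items in the support of $\mathcal{D}$. The judge then assigns $y_0$ to base items and $y_1\neq y_0$ to every intervened item. This is again blind to $\Crit$ and yields $(0,1)$.
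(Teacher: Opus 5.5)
Your proposal is correct and follows the paper's own proof, which is the same direct substitution of each degenerate judge into Eq.~\eqref{eq:SR}, with the flip case dismissed as ``symmetric''. Your extra care in making $\Jud_{\mathrm{flip}}$ well defined, by having it act on (base, intervention) pairs and fixing its output on un-intervened inputs, fills in a step the paper leaves implicit; it does not change the argument.
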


\begin{proof}
Immediate from Eq.~\eqref{eq:SR}: $\Jud_{\mathrm{const}}$ satisfies
$\Jud(x)=\Jud(Tx)$ identically, giving $\Sinv=1$, and never satisfies $\Jud(x)\neq\Jud(Tx)$,
giving $\Rsens=0$. Symmetrically for $\Jud_{\mathrm{flip}}$.
\end{proof}

\begin{corollary}[No monotone scalar separates valid from degenerate]
\label{cor:scalar}
Let $g:[0,1]^2 \to \mathbb{R}$ be any function non-decreasing in each argument, and suppose
$g$ assigns a passing score to some judge with $\Vprof = (s^{\ast},r^{\ast})$,
$s^{\ast},r^{\ast} < 1$. If $g$ is symmetric (as a weighted mean with $\lambda=\tfrac12$, or
a harmonic mean, is) then $g(1,0) = g(0,1)$, and any threshold that admits one degenerate
judge admits the other. More generally, for $\lambda\in(0,1)$ the weighted mean satisfies
$\lambda \cdot 1 + (1-\lambda)\cdot 0 = \lambda$, so a threshold below $\max(\lambda,1-\lambda)$
admits a judge that never reads its input.
\end{corollary}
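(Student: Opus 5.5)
The plan is to reduce everything to Proposition~\ref{prop:degenerate}, which already gives two judges that never read $x$ in a criterion-dependent way: $\Jud_{\mathrm{const}}$ with $\Vprof=(1,0)$ and $\Jud_{\mathrm{flip}}$ with $\Vprof=(0,1)$. The corollary then becomes a statement about the values $g(1,0)$ and $g(0,1)$ relative to whatever threshold $\tau$ admits the non-degenerate judge $(s^\ast,r^\ast)$. I will read ``passing'' as $g(\Vprof)\ge\tau$ for a fixed threshold $\tau$, and ``admits'' in the same sense.

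\emph{Symmetric case.} If $g(s,r)=g(r,s)$ for all $(s,r)$, then $g(1,0)=g(0,1)$ directly. So the two degenerate profiles are either both at or above $\tau$ or both below it, and any threshold that passes one passes the other. I will check that the two named examples are symmetric: the weighted mean $\lambda s+(1-\lambda)r$ at $\lambda=\tfrac12$, and the harmonic mean $2sr/(s+r)$. For the harmonic mean I will adopt the convention that its value at $(1,0)$ is $0$ by continuity, so it assigns both degenerate judges score $0$. This is consistent with the claim, though it means the ``admits'' clause is vacuous for thresholds above $0$. I will note this rather than hide it. Monotonicity is not needed for this part. It only matters for making sense of ``passing score'' as an upper set.

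\emph{Weighted-mean case.} For $g_\lambda(s,r)=\lambda s+(1-\lambda)r$ with $\lambda\in(0,1)$, compute $g_\lambda(1,0)=\lambda$ and $g_\lambda(0,1)=1-\lambda$. So $\max\{g_\lambda(1,0),g_\lambda(0,1)\}=\max(\lambda,1-\lambda)$. Any threshold $\tau\le\max(\lambda,1-\lambda)$ therefore admits whichever degenerate judge attains the max, and by Proposition~\ref{prop:degenerate} that judge never reads its input. I will also tie this to the hypothesis: since $s^\ast,r^\ast<1$, we have $g_\lambda(s^\ast,r^\ast)<1$. The clean bridge is one of two statements. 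Either a threshold that admits a realistic judge with $g_\lambda(s^\ast,r^\ast)\le\max(\lambda,1-\lambda)$ must be at most that max, or, more simply, the corollary's statement is about the threshold value itself.

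\emph{Main obstacle.} The obstacle is interpretive rather than computational. The statement says ``below'', but the natural inequality is ``at or below''. I will prove it with $\tau\le\max(\lambda,1-\lambda)$, which covers the strict case. The hypothesis about $(s^\ast,r^\ast)$ only motivates the threshold being nontrivial; I will say explicitly that the conclusion does not depend on it beyond that. Finally, I will remark that ``never reads its input'' is inherited verbatim from Proposition~\ref{prop:degenerate}. Strictly, $\Jud_{\mathrm{flip}}$ reads whether an intervention was applied, but it does not read $x$ in a $\Crit$-dependent way.
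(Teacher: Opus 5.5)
Your proposal is correct and follows the paper's route: the paper gives no separate proof, and the justification is exactly what you do, evaluating $g$ at the corners $(1,0)$ and $(0,1)$ from Proposition~\ref{prop:degenerate}, so that symmetry gives $g(1,0)=g(0,1)$ and the weighted mean gives $\lambda$ and $1-\lambda$. One small correction: the harmonic mean needs no continuity convention at $(1,0)$, since $2\cdot 1\cdot 0/(1+0)=0$ directly (only $(0,0)$ is $0/0$); and your remark that it scores both corners $0$ is worth keeping, because it shows the body you prove is weaker than the corollary's title, as the harmonic mean or $\min(s,r)$ does separate any judge with $s^\ast,r^\ast>0$ from both degenerate ones.
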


\noindent The operational rule we adopt follows directly, and it is a commitment rather than a
preference: \textbf{$\Rsens$ is never reported without $\Sinv$ measured on the same judge over
matched interventions}, and a judge whose $\Sinv$ falls below a pre-registered floor is reported
as \emph{uninterpretable} rather than as sensitive.

\begin{figure}[tb]
\centering
\includegraphics[width=0.62\textwidth]{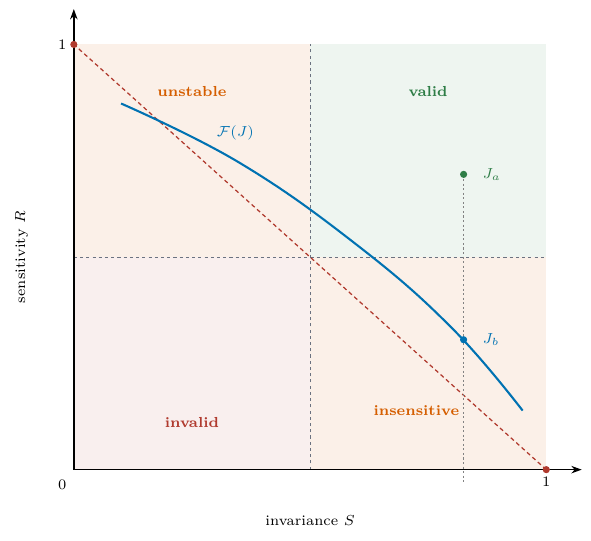}
\caption{The $(\Sinv,\Rsens)$ plane. The two degenerate instruments of
Proposition~\ref{prop:degenerate} are the marked corners: a judge that never moves sits at
$(1,0)$, and one that flips on anything sits at $(0,1)$. The dashed line joining them is the
level set on which a symmetric scalar summary scores them identically
(Corollary~\ref{cor:scalar}), which is why this paper reports the pair. $\mathcal{F}(J)$ traces
one judge as $\tau$ varies, and its default $\tau$, $J_b$, is the only point usually reported.
$J_a$ sits at the same invariance as $J_b$ and in a different quadrant, with the projection to
the axis drawn: what invariance testing alone resolves is that one coordinate, which does not
separate \emph{valid} from \emph{insensitive}. The corners are glossed in
Table~\ref{tab:corners}.}
\label{fig:plane}
\end{figure}

\begin{table}[t]
\centering
\caption{The four corners of the profile, and what each is. The two cells in the top row are
the ones that matter: invariance testing alone cannot separate them, because it does not measure
the coordinate on which they differ.}
\label{tab:corners}
\small
\begin{tabular}{@{}l p{53mm} p{53mm}@{}}
\toprule
 & $\Rsens$ high & $\Rsens$ low \\
\midrule
$\Sinv$ high & \textbf{valid} & \textbf{insensitive}: reliable, and measuring something else \\
$\Sinv$ low  & unstable: may be flipping indiscriminately & \textbf{invalid} \\
\bottomrule
\end{tabular}
\end{table}

\subsection{The frontier, and what a fair comparison is}
\label{sec:frontier}

A judge exposing a decision threshold can trade one coordinate for the other, so a single
profile is one point on a curve and two judges compared at their defaults are two arbitrary
points on two different curves.

\begin{definition}[Validity frontier]
\label{def:frontier}
For a judge family $\{\Jud_\tau\}$ indexed by a decision threshold $\tau$,
\begin{equation}
\mathcal{F}(\Jud) \;=\; \big\{\,\big(\Sinv(\Jud_\tau),\,\Rsens(\Jud_\tau)\big) \;:\; \tau \,\big\}
\;\subset\; [0,1]^2 .
\end{equation}
\end{definition}

\begin{proposition}[Threshold comparisons are not validity comparisons]
\label{prop:frontier}
There exist judges $\Jud^{(1)},\Jud^{(2)}$ and thresholds $\tau_1,\tau_2$ such that
$\Rsens(\Jud^{(1)}_{\tau_1}) > \Rsens(\Jud^{(2)}_{\tau_2})$ while
$\mathcal{F}(\Jud^{(2)})$ dominates $\mathcal{F}(\Jud^{(1)})$ pointwise; that is, the judge
with the higher reported sensitivity is the strictly worse instrument.
\end{proposition}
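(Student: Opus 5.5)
The proof is an explicit construction, in the spirit of Proposition~\ref{prop:indep}: we build two threshold families whose frontiers we can compute in closed form. We fix binary verdicts $\mathcal{Y}=\{0,1\}$ and take $\Jud^{(2)}_\tau$ to be a family whose frontier is the full ``good'' curve, namely $\mathcal{F}(\Jud^{(2)})=\{(s,r): r = 1 - \tfrac12(1-s)\cdot 0\}$. Concretely, we aim for $\mathcal{F}(\Jud^{(2)})$ to contain every point $(s,1)$ with $s\in[0,1]$, with the threshold trading invariance for nothing. We make $\Jud^{(1)}_\tau$ a strictly worse family, say $\mathcal{F}(\Jud^{(1)})=\{(s,\,1-s): s\in[0,1]\}$. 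This is exactly the segment joining the two degenerate corners of Proposition~\ref{prop:degenerate}. Both families are realised by the independent-randomisation device of Proposition~\ref{prop:indep}. For a threshold $\tau\in[0,1]$, $\Jud^{(1)}_\tau$ preserves the base verdict on $\Tinv$-reached inputs with probability $\tau$ and flips on $\Tcon$-reached inputs with probability $1-\tau$. The family $\Jud^{(2)}_\tau$ preserves on $\Tinv$ with probability $\tau$ and always flips on $\Tcon$. By the disjointness argument already used for Proposition~\ref{prop:indep}, the two coordinates are then $(\tau,1-\tau)$ and $(\tau,1)$ respectively.

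Next we make ``dominates pointwise'' precise. We read it as follows: for every point $(s,r)\in\mathcal{F}(\Jud^{(1)})$ there is a point $(s',r')\in\mathcal{F}(\Jud^{(2)})$ with $s'\ge s$ and $r'\ge r$, with strict inequality somewhere. Under the construction this is immediate. For the point $(\tau,1-\tau)$ we take $\tau'=\tau$, which gives $(\tau,1)$: equal invariance and sensitivity at least as large, and strictly larger whenever $\tau>0$. So $\mathcal{F}(\Jud^{(2)})$ dominates $\mathcal{F}(\Jud^{(1)})$ at matched invariance, and strictly so except at a single endpoint.

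It remains to choose thresholds that reverse the reported comparison. The judge $\Jud^{(2)}$ has $\Rsens\equiv 1$ in this construction, so it can never be beaten on sensitivity. We therefore weaken it slightly. Let $\Jud^{(2)}_\tau$ flip on $\Tcon$ with probability $\max(1-\tau/2,\,1-\tau)$, or more simply with probability $1-\tau/2$, which gives frontier points $(\tau,1-\tau/2)$. This still dominates $(\tau,1-\tau)$ strictly for $\tau>0$. Now we set $\tau_1=0.1$ and $\tau_2=0.9$. Then $\Rsens(\Jud^{(1)}_{\tau_1})=0.9$ and $\Rsens(\Jud^{(2)}_{\tau_2})=0.55$, so $0.9>0.55$ as required. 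Meanwhile $\Jud^{(2)}$'s frontier lies above $\Jud^{(1)}$'s everywhere. The conclusion is that a threshold-level comparison favours the dominated instrument, because $\Jud^{(1)}$ was read at a low-invariance operating point.

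The main obstacle is definitional rather than technical. ``Pointwise domination'' of two sets has to be fixed in a way the construction satisfies and that makes ``strictly worse'' meaningful. We resolve it by indexing both frontiers by matched invariance $s$: the frontier is a function $r(s)$ on a common domain, and domination means $r_2(s)\ge r_1(s)$ for all $s$, with strict inequality on a set of positive measure. With this reading each family is a genuine threshold family (monotone in $\tau$), and the remaining checks are routine arithmetic.
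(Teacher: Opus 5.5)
Your final construction is correct, and its core is the paper's: two nested frontiers, with the dominated judge read at a high-sensitivity, low-invariance point and the dominating judge read at a low-sensitivity point. The difference is in execution. The paper simply posits the two frontier sets, the anti-diagonal $\{(1-r,r)\}$ and its shift $\{(1-r+\epsilon,r)\}$, and compares them at matched $\Rsens$, where $\Jud^{(2)}$ has strictly larger $\Sinv$ at every $r$. You instead build both families with the randomisation device of Proposition~\ref{prop:indep}, obtain $(\tau,1-\tau)$ and $(\tau,1-\tau/2)$, and compare at matched $\Sinv$, which is the rule the paper adopts immediately after the proposition.

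Your route buys two things. First, it shows the frontiers are attained by genuine threshold families. Literally so: $\Jud_\tau$ keeps the base verdict iff a uniform score $U$ satisfies $U<\tau$, and for $\Jud^{(2)}$ you use $2U$ in place of $U$ on $\Tcon$-reached inputs. That fills in the monotonicity claim you assert without argument. Second, every point stays in $[0,1]^2$. The paper's second frontier as written has $\Sinv=1-r+\epsilon>1$ for $r<\epsilon$, so its range should be $r\in[\epsilon,1]$ rather than $[0,1-\epsilon]$.

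The paper's version, in turn, gets strict domination at every point. Yours ties at $\tau=0$, where both families sit at $(0,1)$. So ``strictly worse'' holds only in the Pareto sense you state, or you can restrict to $\tau\in[\delta,1]$ with $0<\delta\le 0.1$.

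Finally, strip the false starts. The formula $r=1-\tfrac12(1-s)\cdot 0$ is just $r=1$, the always-flipping $\Jud^{(2)}$ is discarded anyway, and $\max(1-\tau/2,1-\tau)=1-\tau/2$ on $[0,1]$. The written proof should contain only the $(\tau,1-\tau)$ versus $(\tau,1-\tau/2)$ families with $\tau_1=0.1$ and $\tau_2=0.9$, giving $0.9>0.55$.
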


\begin{proof}
Take $\mathcal{F}(\Jud^{(1)}) = \{(1-r,\,r)\}$ and
$\mathcal{F}(\Jud^{(2)}) = \{(1-r+\epsilon,\,r)\}$ for $r\in[0,1-\epsilon]$, so $\Jud^{(2)}$
attains strictly greater $\Sinv$ at every $\Rsens$. Choosing $\tau_1$ with $r=0.9$ and $\tau_2$
with $r=0.1$ gives $\Rsens(\Jud^{(1)}_{\tau_1}) = 0.9 > 0.1$, while $\Jud^{(2)}$ dominates.
\end{proof}

\noindent Hence: judges are compared by frontier, or at a matched $\Sinv$, and never by raw
$\Rsens$ at default thresholds. This is the same discipline as reporting an ROC curve rather
than one accuracy, for the same reason.

\begin{remark}[Getting a frontier out of a judge that has no visible parameters]
\label{rem:graded}
A judge behind an API exposes no decision threshold, no logits one can rely on across vendors,
and no internal state. It would seem to follow that most judges admit only a single point, and
that Definition~\ref{def:frontier} is unavailable for them. It does not follow, and the fix is a
design choice rather than an approximation: \textbf{elicit a graded verdict and threshold it
ourselves.} A judge asked for a calibration score on a fixed scale, together with a rule mapping
scores to verdicts, \emph{is} a family $\{\Jud_\tau\}$, where $\tau$ is the cut we apply to its
own output and control exactly. Every judge then has a frontier on identical terms, whatever
is or is not visible inside it.

Two things this buys beyond convenience. Comparisons become uniform, so no row of
Table~\ref{tab:judges} is a weaker claim than another for reasons of vendor access. And the
threshold is \emph{ours}, so it cannot be tuned per judge to flatter a result: one rule, applied
to all, stated in Appendix~\ref{app:judges}. What it costs is that the graded scale is part of
the instrument now: a judge that grades coarsely has a coarser frontier, and
Appendix~\ref{app:secondary} S3 reports how much of the measured insensitivity is attributable
to scale granularity rather than to the judge.
\end{remark}

\noindent We therefore do not manufacture a frontier by varying a prompt and present it as one;
we obtain one by asking for a score and cutting it. A judge that refuses to grade, or grades
degenerately, is reported as a single point with that fact stated.

\begin{remark}[Relation to reliability, and why fixing one does not fix the other]
\label{rem:reliability}
Test--retest consistency and human agreement are properties of $\Jud$ at fixed input;
$\Sinv$ extends them to perturbed input. By Proposition~\ref{prop:indep} none of the three
constrains $\Rsens$. So the finding that judges are unreliable \citep{reliability2026} and the
finding that they are insensitive are independent facts about the same instrument, and an
intervention that raises agreement (ensembling, self-consistency, rubric refinement
\citep{gunjal2025rubrics}) has no predicted effect on $\Rsens$ at all. \S\ref{sec:probes} reports whether that prediction
holds.
\end{remark}

\section{Related work}
\label{sec:related}

We read the nearby literature for what each result had to assume rather than for what it found,
because a result can be correct and replicated while resting on an assumption that fails at the
limit, and the assumption is what a new paper can move.

Three recur, and this paper tests all three. \textbf{That reliability is the binding constraint on
judge quality}, assumed by the invariance literature and false by Proposition~\ref{prop:indep},
which shows the two coordinates place no bound on each other. \textbf{That overclaiming is one
thing}, assumed by the works closest to ours, which reach the scope/strength distinction and then
unify it deliberately, theoretically \citep{jiang2025conformal} and empirically
\citep{rigourate2026}; we treat the unification as a hypothesis with a refutation condition.
\textbf{That agreement with a validated label set is evidence of validity}, assumed nearly
everywhere and the subject of \S\ref{sec:consequence}, where a construct-blind family reproduces
\vnv{c4PairMin}--\vnv{c4PairMax} of five public sets' labels.

Four lines of work assume nothing this paper tests, and are cited as ancestry rather than as
contrast. Behavioural testing supplies the \emph{invariance} versus \emph{directional
expectation} distinction we adopt \citep{ribeiro2020checklist}; judge evaluation took up the first
and not the second, so what we claim is the application to evaluators, not the distinction.
Validity position papers make the call we answer \citep{dietz2025tropes}. NLI artifacts and
contrast sets are the methodological ancestor, benchmarks solvable without doing the task
\citep{gururangan2018artifacts,poliak2018hypothesis,gardner2020contrast}; the contribution here is
the instrument, the decomposition and the audit rather than the idea. Measurement modelling
supplies the vocabulary \citep{jacobs2021measurement}, to which we add an operational test.

Two more are contrasts that fit in a clause. Automatic reliability stress-testing assumes the
benchmark's labels are sound \citep{judgeharness2026}, which is exactly what our harness tests, so
the two compose rather than compete. And positional bias of faithfulness locates its defect at
predictable positions in the output \citep{wan2024positional}, where mis-scoping has no position
at all: it is a relation between a sentence and its evidence.

Table~\ref{tab:related} states each remaining assumption against the result that carries it; the
right-hand column is where we agree with a result and disagree with what it is taken to license.

%
{\footnotesize
\setlength{\tabcolsep}{4pt}
\renewcommand{\arraystretch}{1.05}
\begin{longtable}{@{}p{0.19\textwidth}p{0.37\textwidth}p{0.36\textwidth}@{}}
\caption{What the nearby results establish, and the assumption each one needs that we test.}
\label{tab:related}\\
\toprule
work & establishes & assumes, and we test \\
\midrule
\endfirsthead
\toprule
work & establishes & assumes, and we test \\
\midrule
\endhead
\bottomrule
\endlastfoot
Judge reliability audits \citep{reliability2026} &
judges are unreliable, and exact-match agreement overstates discrimination &
that the label sets are ground truth. Judges are measured against fixed sets, never audited \\
Mechanistic accounts of judge bias \citep{unfairjudge2026} &
surface manipulations move judge scores, visibly in activations &
only \emph{specificity}: edits that should not flip a verdict. Sensitivity is untested, and no
bias type there is scope or quantifier \\
Noise-corrected evaluation \citep{noisyvalid2026} &
TPR/FPR estimated on a calibration set can be corrected for &
that the calibration set is sound. The guarantees inherit whatever it leaks \\
Judge input-shortcut probes \citep{silentjudge2025} &
judges follow injected metadata cues and never acknowledge them &
that the shortcut is in reading the input. Perturbs inputs, not the construct, and audits no
label set \\
Automatic factual-consistency metrics \citep{zha2023alignscore} &
claim support can be scored by one alignment function &
that the axis of interest is support. A mis-scoped claim is not unsupported, so such a metric is
silent by construction, not by weakness \\
Surface and style confounds, in judges and in reward models
\citep{zheng2023judging,wang2023unfair,zhang2025formatbias,huang2025posthoc,%
lambert2024rewardbench,malik2025rewardbench2,liu2024rmbench,chen2024odin,prism2025,rmbiases2026} &
verbosity, order, formatting and length shift verdicts, and preference labels are recoverable
from them, well enough that benchmarks are now \emph{designed} against it &
that the property should not have mattered, and that the confound is model-side. Every one is an
invariance test; the label-set side has no headroom metric, no cross-set audit, no reporting
convention \\
Preference optimisation \citep{rafailov2023dpo,skalse2022reward} &
a preference signal distils into a policy, and a proxy reward is gameable in the limit &
that the signal means what it says. What a judge cannot distinguish is what the policy may
exploit, so $\Rsens$ is a property of the training signal, not only of the report \\
\end{longtable}}

\subsection{The decomposition, seen twice and collapsed twice}
\label{sec:collapse}

The claim that nobody separates \scopeax\ from \strengthax\ would be false. Two published works
reach both mechanisms and unify them deliberately, which is a stronger position than a gap claim:
a collapse has authors who chose it and a cost that can be measured.

\paragraph{Collapsed theoretically.} \citet{jiang2025conformal} reinterpret linguistic calibration
as answer-set prediction, under which widening what a claim covers and hedging how hard it commits
are the same operation: both enlarge the set of possible worlds in which the claim holds. The
unification is what buys the conformal guarantee, since a single nested family of answer sets is
what a calibrated set predictor needs, and for generation under a coverage guarantee it is the
right modelling choice. Our claim is about evaluation, where \S\ref{sec:noscalar}'s argument
applies to their collapse as much as to a scalar $\Vprof$: a judge that misses a hedge deletion
and catches a quantifier widening has a property one answer-set coordinate cannot express.

\paragraph{Collapsed empirically, in the nearest work's own appendix.} \citet{rigourate2026}
score overstatement as one continuous value, but their appendix decomposes linguistic certainty
into extent, number, framing and probability, and reports that increased overstatement is driven
by greater use of \textbf{probability} and \textbf{extent} certainty. Those are our two axes under
other names, found in their data and discarded by the headline metric. It is the strongest external
support the decomposition has: two mechanisms were seen driving one score, and nobody asked whether
an evaluator responds to them differently.

\paragraph{Observed together outside our domain.} Studying whether a chatbot should assert
generics about social groups, \citet{zhu2026generics} reports that models inconsistently hedge and
decline generalisations reflecting documented patterns. Generic formation is the \scopeax\
operation and hedging the \strengthax\ one, and a model turning both inconsistently is not what a
single unstructured dial produces. The evidence is convergent from a domain with no stake in our
construct.

This makes C3 a test with a named opposing prediction rather than a gap-filling exercise: had
\scopeax\ and \strengthax\ sensitivity not differed at matched $\Sinv$, that would have been
evidence \emph{for} the collapse.

\paragraph{On the sensitivity arm, the same correction.} Applying edits intended to change a
verdict is not unprecedented. \citet{park2026perceptual} build minimally edited counterfactual
responses isolating perceptual errors, catching a judge that fails to downgrade them. Three things
separate it from \S\ref{sec:probes}: it is multimodal and about visual-versus-textual conflict; the
intended direction is established by construction rather than by human adjudication, which is the
circularity Protocol~\ref{prot:direction} closes; and its object is mitigation via reward
modelling rather than measurement, so it reports no invariance arm and therefore no profile. The
claim we keep is the pair, not the direction: $\Sinv$ and $\Rsens$ together, for text constructs,
with direction set by people.

\paragraph{The framing is converging.} \citet{roy2026premise} treat judge rubrics as measurement
specifications and audit them for structural adequacy, reliability, preference fit and adversarial
robustness, concluding that no source is simultaneously reliable, preference-predictive and robust,
and that high inter-rater agreement does not prevent exploitability. That is this paper's premise
reached from the rubric side by different authors. Where it stops is where
\S\ref{sec:diagnostic} starts: PReMISE audits the rubric and the judge, not the \emph{label set}
the judge is scored against, and Definition~\ref{def:vp} is the quantity for that set.

Table~\ref{tab:related} is the positioning; three remarks it cannot carry follow.

\paragraph{The wider ancestry.} That a benchmark can be solvable without doing the task is a
general phenomenon: shortcut learning in deep networks \citep{geirhos2020shortcut}, syntactic
heuristics passing inference benchmarks \citep{mccoy2019hans}, annotator identity recoverable from
labels \citep{geva2019annotator}, counterfactually augmented data proposed against such
correlations \citep{kaushik2020counterfactual}, the benchmarking critique that a leaderboard number
can be uninformative about the capability it names
\citep{bowman2021fixdatasets,raji2021benchmark}, and a formal treatment of reward hacking
\citep{skalse2022reward}. Our contribution is not to observe that this can happen to evaluators but
to give the test that says whether it has.

\paragraph{The nearest work.} \citet{jiang2025conformal} formalise a factuality/specificity
trade-off and rewrite claims to trade one against the other. The distinction is the direction of
the question: they ask how a \emph{generator} should set the specificity of what it writes, we ask
whether an \emph{evaluator} notices when specificity has been changed for it. A calibration result
about generation places no constraint on evaluator sensitivity. \citet{claimcheck2025} is adjacent
in material rather than in question, asking how well grounded LLM critiques of papers are.

\paragraph{Which constructs are at risk, and an adjacent negative result.} The constructs most
exposed are those whose labels are expensive and whose surface correlates are cheap. Claim
specificity is the case we develop, because a resource exists whose labels are known to be
length-separable \citep{chen2026tridehall} and because the construct is well posed
\citep{jiang2025conformal,peters2025generalization}. Probing work on a neighbouring construct, the
strength of evidence supporting a clinical claim, recovered a graded label linearly in every model
tested and then found the recoverable signal was largely lexical and did not transfer across
topics \citep{arasteh2026evidence}. That is our shape of finding from the representation side, and
it is why \S\ref{sec:prevalence} reports transfer rather than only within-set recovery.

\section{Method: the interventions, the direction protocol, and the diagnostic}
\label{sec:probes}

This section fixes the instrument before any result is read off it: how the two
intervention classes of Definition~\ref{def:classes} are realised as concrete edits, who
decides that an edit belongs in the sensitivity arm, which judges and domains the profile
is measured over, and (\S\ref{sec:diagnostic}) the second instrument the paper needs,
which measures not a judge but the label set a judge is validated against. Every choice here
is frozen in the repository before the first judge call, and \S\ref{sec:direction} is the
one a reader should push hardest on.

\subsection{Instantiating the two intervention classes}

The construct is whether a claim overreaches its evidence. $\Tcon$ splits into two axes, fixed in
\texttt{probes/\allowbreak edit\_\allowbreak taxonomy.py} before any judge is run:

\begin{description}[leftmargin=1.4em,itemsep=1pt,topsep=2pt]
\item[\scopeax] (referent set grows, commitment fixed): \emph{population widening},
  \emph{domain extension}, \emph{tense/modality shift} (one observation restated as a standing
  property) and \emph{quantifier strengthening}.
\item[\strengthax] (referent set fixed, commitment rises): \emph{condition removal} (deleting
  the circumstances the effect was observed under) \emph{hedge removal}, and
  \emph{intensifier addition}.
\end{description}

\noindent Both axes are linguistic distinctions before they are machine-learning ones. Hedging
is a studied feature of scientific writing with a documented rhetorical function
\citep{hyland1998hedging}, and uncertainty together with \emph{its scope} has been annotated at
corpus scale in biomedical text \citep{vincze2008bioscope}, which is direct evidence that the
two things we separate are separately annotatable by humans. The adjacent verification
literature asks whether a claim is \emph{supported} \citep{thorne2018fever,wadden2020scifact};
our construct begins where that one ends, since every item in the $\Tcon$ arm is supported on
both sides of the pair and differs only in what it covers or how hard it commits.

\noindent \textbf{And \strengthax\ is a manipulable quantity on the model side, not only a
rhetorical one}, which matters because an axis no model represents is an axis no judge could
plausibly be asked to track. Linguistic calibration (making an agent's expressed confidence
match its competence) has been trained for directly \citep{mielke2022reducing}; expressed
uncertainty has since been located as a \emph{linear} feature in representation space and steered
along it \citep{ji2025verbal}; and \citet{arasteh2026evidence} recover a graded evidence-strength
label linearly in every model tested. Our claim is not that models cannot represent commitment.
\emph{It is the sharper and more awkward one: commitment is representable, steerable and linearly
decodable, and judges still do not act on it.} The \scopeax\ axis has the parallel grounding on the
generation side, where decontextualising a scientific snippet (lifting it off the source without
widening what it covers) is a defined task with gold data \citep{newman2023decontext}, and
over-generalisation under that pressure is measured at scale \citep{peters2025generalization}.

\noindent $\Tinv$ is the control arm, and it deliberately covers the surface properties the judge
literature has \emph{already} shown to move verdicts, because a low flip rate on properties nobody
has implicated proves nothing: \emph{hedge padding}, \emph{same-scope elaboration},
\emph{register shift}, \emph{clause reordering}, \emph{verbosity padding}
\citep{zheng2023judging}, and \emph{format shift} \citep{zhang2025formatbias}.

\begin{figure}[t]
\centering
\includegraphics[width=0.9\textwidth]{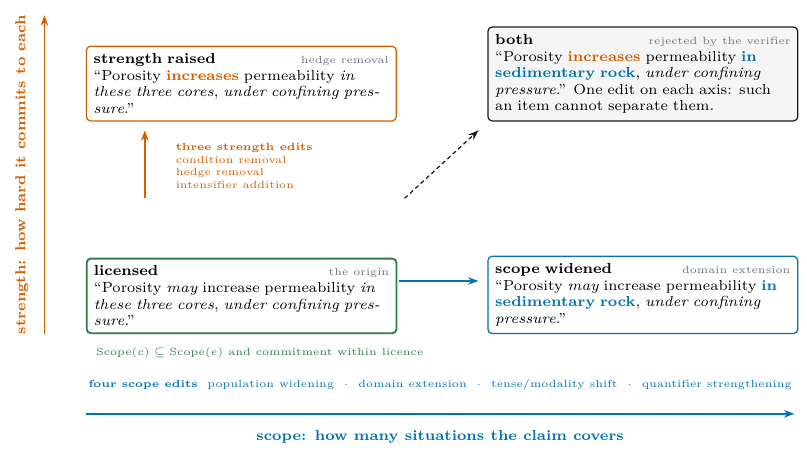}
\caption{The decomposition on one worked claim. From a single licensed sentence, \scopeax\ moves
right (the referent set grows while commitment is untouched) and \strengthax\ moves up: the
referent set is fixed and commitment rises. The seven $\Tcon$ edit types sit beside the axis each
moves, and in each cell the changed span is coloured with the axis that moved it; every cell is
one edit from the origin. The diagonal is the case the verifier rejects: an item that moved both
axes cannot separate them, and admitting it would contaminate the very contrast
\S\ref{sec:res-axes} rests on.}
\label{fig:axes-diagram}
\end{figure}

\begin{remark}[Why these two axes and not one]
The axes are separable in the sense measurement requires: an edit on one can be applied while
holding the other's operators fixed, and a verifier can check which moved. They are not claimed to
exhaust overreach. Splitting them is a modelling choice that earns its place only if judges behave
differently across it, which is a result, reported below, not an assumption.
\end{remark}

\subsection{Who decides that the verdict should change}
\label{sec:direction}

An intervention is in $\Tcon$ only if it changes the correct verdict, which makes this the
paper's load-bearing methodology. If a language model adjudicates that, we are testing models
against models and any asymmetry could be a shared blind spot rather than a property of judges.

\begin{protocol}[Direction assignment]
\label{prot:direction}
Three disjoint model families, and humans hold the deciding vote.
\begin{enumerate}[leftmargin=1.4em,itemsep=1pt,topsep=2pt]
\item \textbf{Generate} the candidate edit (family A).
\item \textbf{Verify} mechanically (family B, never A): facts unchanged; the intended axis moved
  and the other axis's operators untouched; no provenance anchor introduced; register comparable;
  length within tolerance.
\item \textbf{Assign direction by human annotation.} Annotators see the pair without knowing which
  member is the edit, which axis it belongs to, or what any model said, and answer only whether
  the second sentence claims more than the evidence licenses. An item enters the sensitivity arm
  only at agreement $\ge \vnv{humanAgreementMin}$ (pre-set;
  \texttt{HUMAN\_\allowbreak DIRECTION\_\allowbreak AGREEMENT\_\allowbreak MIN}).
\item \textbf{Judge} (family C, never A or B).
\end{enumerate}
\end{protocol}

\noindent Two reporting rules follow. Per-slot yield appears in the same table as the result,
because if \strengthax\ edits are harder to construct cleanly than \scopeax\ ones then the
surviving \strengthax\ items passed a stricter filter and part of any asymmetry is the pipeline
rather than the judges. And human--verifier disagreement is reported rather than resolved
silently, since it bounds how far the mechanical gate can be trusted where humans were not asked.
The pool and the pre-set bar of \vnv{humanAgreementMin} are specified in
Appendix~\ref{app:protocol}, frozen before any data was collected, together with the rule attached
to it: a slot or axis below the bar is reported as \textbf{not measurable} and excluded from
$\Rsens$ rather than reported with a wider interval, because a wide interval around an ill-defined
construct still asserts the construct exists.

\paragraph{What the annotators found.} \vnv{annRaters} annotators independently labelled all
\vnv{annItems} items; the third worked the full batch rather than arbitrating the first two's
disagreements, so the three label sets are independent. Raw pairwise agreement is \vnv{annRaw}
(Fleiss' $\kappa$ \vnv{annKappa}, the three pairwise rates spanning
\vnv{annPairMin}--\vnv{annPairMax}), and every axis clears the bar: \scopeax\
\vnv{annScopeRaw} ($n=\vnv{annScopeN}$), \strengthax\ \vnv{annStrengthRaw}
($n=\vnv{annStrengthN}$), invariance \vnv{annControlRaw} ($n=\vnv{annControlN}$). Only
\vnv{annNoMajority} items (\vnv{annNoMajorityPct}) drew three different answers.
\textbf{The two construct axes are equally well defined for human readers.} \scopeax\ is
nominally higher, but by less than the spread between annotator pairs, so we claim no ordering;
what the bar was set to rule out, that one of the two distinctions is not one people reliably
make, happened on neither.

Of the annotated items, \vnv{annUsableConstruct} of \vnv{annConstructAnnotated} construct items
carry a majority direction matching the intended one (\vnv{annUsableScope} \scopeax,
\vnv{annUsableStrength} \strengthax) and \vnv{annUsableControl} of \vnv{annControlAnnotated}
invariance items survive; Table~\ref{tab:annotation} gives the per-slot breakdown, with agreement
and yield in separate columns because they fail differently.
\texttt{tense\_\allowbreak modality\_\allowbreak shift} is the lowest-yield slot at \vnv{annTenseYield} on ordinary
agreement: the readers agree, and what they agree on is that the edit often left the claim where
it was. That is a generator yield problem rather than a construct problem, so the slot stays in
\scopeax\ with its yield stated.

\input{figs/tab_annotation_float}

\paragraph{Two invariance slots did not survive.} The two that manipulate length,
\texttt{verbosity\_\allowbreak padding} and \texttt{elaboration}, fall below the bar at
\vnv{annSlotVerbosityPaddingRaw} and \vnv{annSlotElaborationRaw}, while the other
\vnv{annControlPassedSlots} invariance slots agree at $\ge \vnv{annControlPassedMin}$. They are
excluded under the pre-registered rule. Two separate problems were stacked there, and only the
annotators could have separated them.

The first was ours: the generator padded sentences with \emph{as is often observed},
\emph{in practice} and \emph{as is well known}, which read as filler but each assert something the
evidence never stated. On the first batch \vnv{annPadPreDir} of \vnv{annPadPreN} items drew a
majority \emph{directional} verdict. Three revisions of the generation prompt failed to stop it,
and what worked was a fixed marker list checked programmatically before the verifier sees the pair
(\texttt{epistemic\_\allowbreak marker\_\allowbreak change}): \emph{a prompt states an intention, a filter enforces a
property.} Regenerating the same slots under the gate, with the same annotators, took the rate to
\vnv{annPadPostDir} of \vnv{annPadPostN}.

The second is untouched by the gate. Agreement barely moved
(\vnv{annPadPreAgree} to \vnv{annPadPostAgree}) as the contamination cleared, because the
disagreement changed hands: afterwards one annotator still called an appositive gloss directional
(\vnv{annPadPostCallsHi} of \vnv{annPadPostN}) while the other two almost never did
(\vnv{annPadPostCallsLo}), and that annotator agreed with the others at \vnv{annPairMax} on the
main batch. Whether naming an entity more precisely changes what a sentence claims is a question
our codebook never answered, and one competent reader in three answers it the other way. We report
it as a boundary of the invariance construct: \emph{surface-preserving} is well defined for
formatting, register and word order, and not for appositive elaboration
(\S\ref{sec:limitations}).

\subsection{Judges and domains}

The design floor is six judge families spanning vendors, parameter scales, and
reasoning-enabled versus not, over at least four evaluation domains. That floor is not
negotiable downward for a reason of kind rather than of power: a single-domain result is a
result about that domain, and this paper's claim is about the instrument.
Restricting the study to one construct in one domain would make the finding a fact about
geoscience claim-scope rather than about evaluators, so the profile is measured across domains
whose criteria differ in kind, with the scientific-claim domain as the one where minimal edits are
best posed and human verification is cheapest.

\begin{table}[t]
\centering
\caption{Validity profiles. $\Sinv$ from the $\Tinv$ control arm, $\Rsens$ from $\Tcon$ split by
axis. Per-slot $n$ and human confirmation are part of the result, not appendix material:
\S\ref{sec:direction} explains why. \emph{$n$} counts items whose majority human label matched
the intervention's intent and whose slot met the agreement bar: the items that actually enter
the arm. \emph{human dir.} is that count as a fraction of items annotated, so a low value marks a
slot the generator often failed to move rather than one humans could not read: \texttt{tense
modality shift} is the clearest case, at \vnv{annTenseYield} with ordinary agreement.
$^\dagger$ below the pre-set bar of \vnv{humanAgreementMin} and therefore excluded
(\S\ref{sec:direction}). Pipeline discard before annotation was \vnv{pipelineDiscard} overall,
dominated by the length tolerance; it is reported as an aggregate because
\texttt{build\_pairs.py} keys its rejections by reason rather than by slot, and a per-slot figure
we did not record is not one we will estimate.}
\label{tab:profile}
\small
\input{figs/c4_profile}
\end{table}

\subsection{The diagnostic: what a label set could ever have detected}
\label{sec:diagnostic}

\begin{definition}[Provenance-inherited label]
\label{def:pil}
Let each item $x_i$ be produced under a generative condition $\Cgen_i$, which side of a
preference pair it occupied, which model emitted it, which prompt variant produced it. A label
set $\Lset = \{(x_i, y_i)\}$ is \emph{provenance-inherited} if $y_i = g(\Cgen_i)$ for some $g$,
rather than the result of a reading of $x_i$ that is blind to $\Cgen_i$.
\end{definition}

Definition~\ref{def:pil} is about how the label was obtained, not about whether it is correct. A
provenance-inherited label can be perfectly correct and still be useless for validation, and that
is the point: correctness and informativeness come apart here.

The condition is not exotic. It is what happens by default whenever a label set is assembled from
a generation pipeline rather than from independent reading, which is now the common case: surveys
of synthetic-data curation treat the generating condition as a first-class part of the artifact
\citep{long2024syntheticsurvey}, and the metrics proposed for certifying generated data are about
quality and trustworthiness of the \emph{content} rather than about whether the \emph{label} is
reachable without the construct \citep{zhang2026dataauditor}. Validation power is that missing
quantity. It is also why the diagnostic is cheap: $\Cgen_i$ is usually recorded somewhere in the
pipeline, so a set's authors can compute $\vpow$ on their own artifact before releasing it, and
the sixth item of \S\ref{sec:checklist} asks only that they release the field so others can.

\begin{definition}[Validation power, relative to an input mode]
\label{def:vp}
Let $\mathcal{M}$ be an \emph{input mode}: the structure in which an instrument is shown an item.
Two modes matter here. In \textsc{per-item} mode the instrument sees one response and returns a
label. In \textsc{paired} mode it sees two responses to the same prompt and returns which one the
label set prefers. For a label set $\Lset$ and a mode $\mathcal{M}$, let
$\base(\Lset;\mathcal{M})$ be the agreement with $\Lset$ achieved by the best predictor in a
fixed family of \emph{impoverished} predictors \emph{operating in mode $\mathcal{M}$}, and let
$\ceil(\Lset;\mathcal{M})$ be the human ceiling in that mode. Then
\[
  \vpow(\Lset;\mathcal{M}) \;=\; \ceil(\Lset;\mathcal{M}) - \base(\Lset;\mathcal{M}).
\]
\end{definition}

\begin{warnbox}
\noindent\textbf{$\vpow$ is undefined until the mode is fixed.} A per-item predictor can exploit
only an absolute signature (\emph{responses over this length are preferred}); a paired predictor
exploits a relative one (\emph{whichever of these two is shorter is preferred}). Where a set's
surface confound is relative, the per-item estimate understates recoverability by an amount
nothing bounds in advance: \textsc{MT-Bench} gives $\base = \vnv{c4AimpMtbench}$ per item and
\vnv{c4PairSurfMtbench} paired. The mode reported must be the mode the instrument is used in.
\end{warnbox}

\begin{corollary}[The two modes are not orderable in general]
\label{cor:modes}
Neither mode dominates. A set whose label is a function of an absolute property is recoverable
per item and may be near chance paired, if both sides of every pair share that property. A set
whose label is a function of a relative property is the reverse. Hence a single reported $\base$
without its mode is uninterpretable, and reporting the \emph{lower} of the two is not
conservative; it is simply the answer to whichever question was not asked.
\end{corollary}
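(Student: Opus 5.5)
The plan is to prove Corollary~\ref{cor:modes} by exhibiting two label sets, one for each direction of non-dominance, and computing $\base$ in both modes for each. The claim to establish is that neither $\base(\Lset;\textsc{per-item})\le\base(\Lset;\textsc{paired})$ nor the reverse holds for all $\Lset$. Once that is shown, the remarks about reporting follow from Definition~\ref{def:vp}, since $\vpow$ is indexed by mode. I would fix the impoverished family to a minimal form: per item, threshold rules on a single surface statistic $\ell(x)$ such as length; paired, the rule ``prefer the response with smaller (or larger) $\ell$'' together with constant rules. Because both families contain a constant or coin-flip predictor, each $\base$ is at least chance, which I take as $\tfrac12$ for balanced sets.

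\emph{First construction (absolute property).} Let each pair $(x_i^{+},x_i^{-})$ consist of a preferred and a dispreferred response. Put all preferred responses at length $\ell=a$ and all dispreferred ones at $\ell=b\neq a$. Per item, the label ``preferred'' is $\mathbb{1}[\ell(x)=a]$, so the threshold rule recovers it exactly and $\base=1$. This is not yet the case the corollary describes, since here the paired length rule also scores $1$. To get the stated case, where both sides of every pair share the property, I would make the per-item label depend on an absolute property that is constant within a pair and varies across pairs. Let prompts come in two types. For type-$A$ prompts both responses are long and both are labelled acceptable. For type-$B$ prompts both are short and both are labelled unacceptable. Within each pair, paired preference is assigned by a fair coin independent of $\ell$. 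Then the per-item threshold gives $\base=1$, while every paired surface rule sees equal $\ell$ and can do no better than chance, so $\base=\tfrac12$.

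\emph{Second construction (relative property).} Draw pair lengths so that the marginal distribution of $\ell$ is identical for preferred and dispreferred items. For example, take each pair's lengths as $(u,u+1)$ with $u$ spread over a wide range, and make the shorter response preferred. Paired, ``prefer the shorter'' gives $\base=1$. Per item, $\ell$ is nearly independent of the label, so every threshold rule is within $O(1/\text{range})$ of chance, and letting the range grow drives $\base(\textsc{per-item})\to\tfrac12$. The MT-Bench figures in the preceding warnbox illustrate this direction empirically, but the proof does not rely on them.

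The step I expect to be the main obstacle is making the two modes answer a comparable question, because per-item labels and paired preference labels are different objects. I would handle it by specifying that a set supplies both a per-item label and, within pairs, a preference, and that ``recoverable'' means agreement with whichever label the mode queries. The constructions then have to assign both labels consistently, which is exactly what the coin-flip preference within type-$A$ and type-$B$ pairs does. With both inequalities witnessed, any single reported $\base$ leaves the other mode's value unconstrained. So reporting the smaller value is not an upper bound on recoverability in the mode of use, which is the sense of ``not conservative'' in the statement.
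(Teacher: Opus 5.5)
Your proposal is correct and follows the paper's own argument. The paper gives no separate proof: its argument is the two cases written into the corollary itself, namely a label that is a function of an absolute property shared by both sides of each pair (recoverable per item, near chance paired), and a label that is a function of a relative property with no absolute band marking the preferred side (the reverse), and your two constructions are explicit instances of exactly these. What you add is rigour the paper leaves implicit, chiefly separating the per-item label from the within-pair preference, which the first case genuinely needs, because a pair-role label can never be a function of a property both members of the pair share.
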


Validation power is the discriminative room a judge has to earn. Where $\vpow \approx 0$, a
judge's agreement with $\Lset$ is uninformative at any magnitude, because an instrument that
reads only surface form already reaches the ceiling. Where $\vpow$ is large, agreement is
evidence. Reporting $\vpow$ rather than agreement is the single change we ask for.

\paragraph{The impoverished family, frozen.} Three predictors, each denied the construct by
construction:
\begin{enumerate}[leftmargin=1.4em,itemsep=1pt]
\item \textbf{Length only.} Token count of the item and nothing else.
\item \textbf{Surface only.} Character $n$-grams, punctuation and formatting statistics, and
  hedge-marker counts, no content words carrying the construct.
\item \textbf{Provenance-recoverable.} A predictor of $\Cgen_i$ from $x_i$. If $\Cgen$ is
  recoverable and $y = g(\Cgen)$, then $y$ is reachable without the construct, and this predictor
  bounds how much of $\Lset$ any surface-reading instrument can obtain.
\end{enumerate}

\paragraph{What makes it a test rather than a statistic.} Three requirements, all of which the
released harness enforces. Agreement is chance-corrected, so $\vpow$ is not inflated by class
imbalance. Each $\base$ is reported against a label-permutation null, so a small positive
$\vpow$ is distinguishable from noise. And $\ceil$ carries its own interval, because a ceiling
estimated from few double-annotated items can be the larger source of uncertainty in
Definition~\ref{def:vp}.

\begin{remark}[Why not simply audit the judge harder]
A judge audit answers ``does this instrument behave consistently''. It cannot answer ``does
agreement with this label set mean anything'', because that question is about the label set. The
two are independent: \S\ref{sec:intro}'s apparent contradiction is exactly a well-behaved judge
meeting a leaky ruler.
\end{remark}

\section{Experiments: the profile, the asymmetry, and what the rulers could have seen}
\label{sec:experiments}

Five measurements, in the order they depend on each other. The profile
(\S\ref{sec:res-profile}) is the headline and needs only \S\ref{sec:probes}'s instrument. The
axis decomposition (\S\ref{sec:res-axes}) splits it, and the accuracy-prompting manipulation
(\S\ref{sec:paradox}) is the split's out-of-sample test, since it predicts a published
result's sign structure rather than fitting our own. The validation-power sweep
(\S\ref{sec:prevalence}) then asks why the first three findings are not already in the
record, and \S\ref{sec:consequence} prices the answer. Every measured number below is generated
from \texttt{results/} rather than typed into the manuscript, and the two arms that remain unrun
are named as such where they would appear.

\subsection{Judges are far more invariant than sensitive}
\label{sec:res-profile}

\begin{figure}[t]
\centering
\input{figs/plot_profiles}
\caption{Validity profiles for \vnv{profJudges} judges. Frontiers are drawn for the judges that
expose a decision threshold, which under Remark~\ref{rem:graded} is all of them that grade;
any judge that will not grade is a single labelled point, per Proposition~\ref{prop:frontier}. The dashed diagonal is $\Sinv+\Rsens=1$, where a
judge is trading one property for the other rather than possessing both. Curves are the
monotone hull over our threshold sweep: at each invariance we keep the best sensitivity achieved
at that invariance or stricter, so a curve is a frontier rather than a trace of every threshold.
Every vertex is an attained threshold, so a long straight segment is interpolation between two of
them and not a measured path; the ringed vertex is the matched-invariance operating point
Table~\ref{tab:judges} reads off.}
\label{fig:profiles}
\end{figure}

\begin{table}[t]
\centering
\caption{Per-judge profiles. $\Rsens$ is reported per axis and never without $\Sinv$
(Corollary~\ref{cor:scalar}); the final column records whether the row was read off a frontier
or a default threshold, because the two are not comparable. \predkey}
\label{tab:judges}
\footnotesize
\input{figs/tab_judges}
\end{table}

Figure~\ref{fig:profiles} shows the vertical gap the study was built to look for, and it is
large. Held at matched invariance $\Sinv\ge\vnv{profSinvTarget}$, the \vnv{profJudges}\ judges
average $\Sinv=\vnv{profSinvMean}$ against $\Rsens=\vnv{profRsensMean}$: the \emph{insensitive}
cell of Table~\ref{tab:corners}, not the \emph{valid} one. This is not an artifact of a weak
field. The strongest instrument in the sweep, \vnv{profBestJudge}, reaches
$\Rsens=\vnv{profBestRsens}$ and so still misses more than two construct changes in five while
holding its verdict on \vnv{profSinvMean}\ of edits that should not move it.

\paragraph{The comparison that decides Remark~\ref{rem:reliability}.} If ensembling and
self-consistency raise $\Sinv$ without moving $\Rsens$, then the field's standard reliability
remedies do not touch validity, and Proposition~\ref{prop:indep} has an empirical instance
rather than only a proof. We run those two remedies as additional judge rows for this reason.

\subsection{The \scopeax/\strengthax\ asymmetry}
\label{sec:res-axes}

\input{figs/tab_slots_float}
\begin{figure}[t]
\centering
\input{figs/plot_axes}
\caption{Sensitivity per intervention axis, per judge, at matched $\Sinv$, with the control-arm
flip rate on the same scale. The control bar is the reference that makes the other two
interpretable: a construct-edit bar near the control bar is a judge responding to a
construct change no more than to an edit that should not move it at all. Judge labels are
shortened vendor names; per-judge thresholds and instrument quality are in
Table~\ref{tab:judges}.}
\label{fig:axes}
\end{figure}

The pattern holds. Compared at matched invariance, $\Rsens^{\scopeax}=\vnv{profScopeMean}$
against $\Rsens^{\strengthax}=\vnv{profStrengthMean}$: a gap of \vnv{gapObs} at a control flip
rate of \vnv{profCtrlFPR}. \textbf{The gap has the same sign on \vnv{profGapPos} of the
\vnv{profGapN} judges}, spanning vendors, parameter scales and reasoning-enabled or not, with
per-judge values in Table~\ref{tab:judges}. Judges detect that a claim now covers more and
largely miss that it now commits harder.

The sign, the magnitude and the condition that would have refuted them were all registered
before measurement (Appendix~\ref{app:predicted}): the gap was to be reported as refuted if it
came in below $0.10$ or failed a paired bootstrap over base items. It clears both. Three further
checks, all specified before the numbers existed, hold. A bootstrap resampling
\emph{base items} rather than pairs (several pairs share a base sentence across slots, so
resampling pairs would treat those as independent draws) puts the gap at $\vnv{gapLo}$ to
$\vnv{gapHi}$ over \vnv{gapBaseItems}\ items, excluding zero. Permuting the axis label within
the construct arm while holding every judge's grades fixed gives \vnv{gapPerm}, with a
largest null gap of \vnv{gapNullMax}. And on the \vnv{gapUnanN}\ items where all
\vnv{annRaters}\ annotators agreed on direction, the pre-committed robustness split, the gap is
\vnv{gapUnan}: \emph{larger} than on the full set, which is the direction that makes it a fact
about judges rather than about annotation difficulty.

\paragraph{The length controls, and why the gap is a lower bound.} Edit magnitude is the
alternative explanation with the most force: \strengthax\ edits are shorter on average than
\scopeax\ ones, so a judge merely less sensitive to small edits would reproduce the asymmetry with
no axis structure. Three checks close it, and the third turns the confound into support.

The magnitude match survives human filtering. Items enter the arm only on human confirmation, and
confirmation rates differ by slot, so the match built at construction had to be re-verified on
what actually entered: median $|\Delta|$ length is \vnv{audMagScope} on \scopeax\ against
\vnv{audMagStrength} on \strengthax. The \emph{signed} change does differ, with \scopeax\ edits
lengthening (\vnv{audSignScope} median, \vnv{audLongerScope} longer) and \strengthax\ edits
shortening (\vnv{audSignStrength}, \vnv{audLongerStrength}). So the second check asks whether
grades track length at all: across every graded sentence, the correlation between a sentence's
length and its grade is \vnv{audCorrMin} to \vnv{audCorrMax} over the \vnv{profJudges}\ judges.
\textbf{It is positive, meaning judges grade longer sentences as slightly better scoped}. Since
the \scopeax\ edit is the longer member, that bias pushes judges toward naming the
\emph{original} on \scopeax\ items and the edit on \strengthax\ ones. It works against the
observed gap, which makes \vnv{gapObs} a lower bound rather than an inflated estimate. Third, the
gap survives stratifying on the sign: \vnv{audGapLong} where the edit lengthened
(\vnv{audGapLongPos} of \vnv{audGapLongN} judges) and \vnv{audGapShort} where it shortened
(\vnv{audGapShortPos} of \vnv{audGapShortN}).

\paragraph{Per slot, which is where the result is most useful.} Table~\ref{tab:slots} gives
$\Rsens$ for each intervention. The ordering is more informative than the axis means:
\texttt{quantifier\_\allowbreak strengthening} and \texttt{domain\_extension} are the most detected at
\vnv{audBestSlotR}, and \texttt{hedging\_removal} the least at \vnv{audWorstSlotR}, a factor of
more than two between two edits that a reader would call equally clear cases of overreach. The
magnitude column shows why magnitude cannot be stratified independently: operation fixes it, since
a deleted condition clause cannot be a small edit and an added intensifier cannot be a large one.

\noindent The slot distributions overlap across the axes: \texttt{\vnv{audCrossSlot}} is a
\strengthax\ slot detected at \vnv{audCrossSlotR}, above \vnv{audCrossBeats}\ of the four
\scopeax\ slots. The axis gap is therefore a difference between the central tendencies of two
heterogeneous groups rather than a categorical separation, and the registered decomposition
accounts for $\eta^2=\vnv{audEtaReg}$ of the variance in slot-level $\Rsens$; \S\ref{sec:limitations}
records an alternative grouping that fits comparably.

\subsection{The accuracy-prompting paradox}
\label{sec:paradox}

If overreach were one quantity, an instruction to be accurate could not increase it. Under
Definition~\ref{def:classes} it can: a demand for accuracy is a demand to sound certain, which
suppresses \scopeax\ while inducing \strengthax, because hedges and stated conditions are exactly
what reads as uncertainty. We rerun the manipulation with the axes scored separately, on the
generator side, and then ask whether judges' per-axis sensitivity predicts which of the two a
model will exploit under that pressure.
\deferred{The two per-axis effects. The registered prediction is that they have opposite signs, with the refutation condition in Appendix~\ref{app:predicted}: if both axes move the same way, the decomposition does not explain the paradox. The manipulation is generator-side, so it needs fresh generation and a per-axis readout that does not yet exist.}

\paragraph{Declared overlap.} The control arm replicates established results:
\citet{unfairjudge2026} on surface manipulation, \citet{silentjudge2025} on injected metadata
cues, \citet{zhang2025formatbias} on format, \citet{zheng2023judging} on verbosity, and
\citet{huang2025posthoc} on length in reward models. We claim novelty for none of it. It is here
because $\Rsens$ is uninterpretable without $\Sinv$ (Proposition~\ref{prop:degenerate}), which
makes the replication a measurement requirement rather than a contribution. What is new is the
$\Tcon$ arm, its decomposition, the human-set direction, and the profile. We note that none of
the seven bias types in \citet{unfairjudge2026} is scope or quantifier structure.

\subsection{How widespread it is: validation power of the public label sets}
\label{sec:prevalence}

Table~\ref{tab:prevalence} answers the objection raised in \S\ref{sec:intro}: if judges really
miss \strengthax\ change, why has no one reported it. It requires no annotation of ours and runs
on public artifacts at pinned revisions.

\noindent The per-item sweep returned
$\base$ between \vnv{c4AimpMin}\ and \vnv{c4AimpMax}, which reads as healthy headroom. In the
mode these sets are actually used in, a predictor reading nothing but surface form reproduces
\vnv{c4PairMin}--\vnv{c4PairMax}\ of their labels. Both columns appear in
Table~\ref{tab:prevalence} because the difference between them is the finding.

\paragraph{What the two modes measure.} Definition~\ref{def:vp} carries the mode explicitly and
Corollary~\ref{cor:modes} says why neither dominates. \textsc{RewardBench} pairs are built so the
two sides differ within a pair, but across the set no absolute length band marks the chosen side,
so a per-item predictor is near chance and a paired one is not. Our corpus is the opposite, with
its preferred side in a narrow absolute band, so both modes succeed. Reporting the per-item number
for a pairwise benchmark answers a question nobody asks of that benchmark.

\begin{table}[t]
\centering
\caption{Construct-blind recovery of the label sets the literature validates judges against, at
the revisions pinned in \texttt{results/\allowbreak labelsets/\allowbreak PROVENANCE.json}, in both input modes of
Definition~\ref{def:vp}. Per-item $\base$ is chance-corrected agreement; paired accuracy has
chance at $50\%$ with side order randomised. Group-aware folds throughout; every paired row
clears its label-permutation null, which sits at or below \vnv{c4PairNullMax}. HelpSteer2 has no
paired column because it ships per-response ratings rather than comparisons. The final row is a
provenance-inherited set audited under the identical protocol.}
\label{tab:prevalence}
\small
\input{figs/c2_prevalence}
\end{table}

\paragraph{The design assumption that fails.} \textsc{RewardBench} constrains chosen responses to
be no longer than rejected ones \citep{lambert2024rewardbench}, on the implicit assumption that
removing the naive direction of the length signal removes the length confound. Measured on the
released set, the chosen response is the longer one in $40.3\%$ of pairs, so the content-free rule
\emph{prefer the shorter response} is correct on $59.7\%$. \textbf{The confound was not removed;
its sign was flipped and its magnitude preserved}, and a reward model that learned to prefer
brevity scores above chance on the benchmark built to detect it. A control defined on a property of
the items is not a control on what a predictor can do with that property.

\paragraph{The one set that holds up, and the one that does not.}
\vnv{c4PairCleanest}\ is closest to chance in paired mode at \vnv{c4PairCleanestAcc}. It is built
by crossing every pair with three deliberate style levels, putting style variation inside the
benchmark rather than leaving it outside as a confound; this row is what makes the others
interpretable, and the design is the one we would ask others to copy.
\textsc{MT-Bench} is the opposite. Its labels are human pairwise votes, and a surface-only paired
predictor reproduces \vnv{c4PairSurfMtbench}\ of them, the highest of the public sets and higher
than either pair-role benchmark. Definition~\ref{def:pil} predicts the reverse ordering, since
labels further from the generative condition should be harder to reach without the construct.
Either the definition is incomplete or human preference on this task is itself substantially a
length preference; we cannot separate the readings here and do not report the ordering as support
for the definition (\S\ref{sec:limitations}).

\paragraph{The contrast that sets the scale.} Run the identical family, folds and nulls against a
set whose labels \emph{are} a function of the generative condition, and the paired predictor is
correct on \vnv{c4PairSurfEcp}\ of pairs against \vnv{c4PairMin}--\vnv{c4PairMax}\ for the public
sets \citep{tridehallecp2026}. So Definition~\ref{def:pil} does describe a real and severe
failure mode. What has changed is that the public benchmarks are no longer the clean comparison
they appeared to be in the per-item sweep: they sit well above chance, not at it.

\paragraph{What is still untested, stated plainly because neither mode settles it.} C4's
prediction was \emph{per axis}: that \strengthax-axis validation power would be near zero even
where aggregate power is healthy. \textbf{That prediction is untested in either mode and cannot be
tested on these sets}, because splitting $\vpow$ by axis requires the pairs in a row set to differ
by an identifiable \scopeax\ or \strengthax\ edit and none of the five is labelled that way. The
released intervention set is, so the per-axis form is answerable on it and we leave it to future
work.

\paragraph{The reversal.} Under per-item scoring the human-labelled sets look cleanest, which is
what Definition~\ref{def:pil} predicts; under paired scoring, the mode judges are actually used in,
MT-Bench moves from the cleanest set to the most recoverable. That reversal is
Corollary~\ref{cor:modes} with data attached: the two modes are not orderable, and reporting
whichever is lower is not the conservative choice it appears to be.

\paragraph{Transfer.} Within-set recovery is necessary but not sufficient
\citep{arasteh2026evidence}: a predictor that recovers a set may be exploiting a
set-specific idiom rather than a general surface correlate. We therefore fit each impoverished
predictor on one set and evaluate it on the others.

Of \vnv{xferPairs} ordered set
pairs, \vnv{xferDefined} admit the question at all: the remaining \vnv{xferUndefined} have label
spaces that do not share two categories, so a predictor fitted on one cannot be scored on the
other even in principle. \textbf{That is a finding about the state of public preference labels
rather than about our predictors}, since these sets are routinely discussed as though they
measured the same thing. On the \vnv{xferDefined} pairs where transfer is defined, the surface predictor
carries $\kappa=\vnv{xferMax}$ and $\vnv{xferMin}$, far below its within-set recovery
(Table~\ref{tab:prevalence}). The recovery is therefore substantially a set-specific idiom, which
is what makes within-set recovery necessary but not sufficient.

\subsection{What it costs: a blinded replacement, and $\Delta$}
\label{sec:consequence}

The construct is claim specificity: whether a sentence asserts more than its evidence licenses.
We choose it because a provenance-inherited label set for it already exists and is known to leak
\citep{chen2026tridehall}, so the comparison is against a real, deployed ruler rather than a
straw one.

\begin{figure}[t]
\centering
\includegraphics[width=\textwidth]{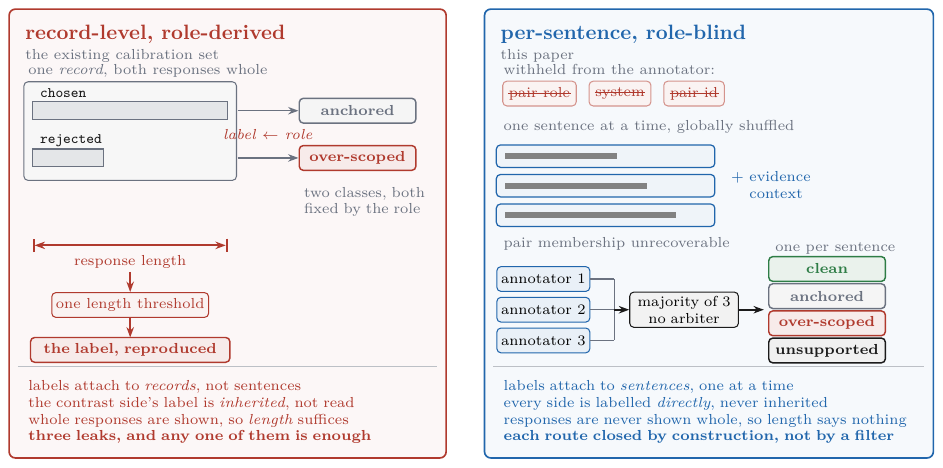}
\caption{Why a provenance-inherited label set is separable without reasoning about the
construct, and what the blinded design withholds. Left: labels attach to whole records by their
role in the pair, so one threshold on response length reproduces them, at the agreement reported
by \citet{chen2026tridehall}. Right: the unit is one sentence with its evidence context, every
field that identifies its side withheld, sides interleaved so pair membership is unrecoverable,
and the label the majority of three independent annotators with no arbitration pass. Both panels
read in two bands: above the rule, what the annotator sees and how its label is formed; below it,
the surface route the design leaves open or closes. A class chip is drawn identically wherever it
appears, so the two classes a pair role can supply compare directly against the four.}
\label{fig:blinding}
\end{figure}

\paragraph{Design.} Four per-sentence classes (\Clean, \Anchored, \Scoped, \Unsup) rather
than the two a pair role can supply. Every annotation unit is one sentence with its evidence
context, presented with no indication of which side of a pair it came from, no indication of the
producing system, sentences from both sides interleaved and globally shuffled so pair membership
is unrecoverable, and the length cue broken by presenting sentences individually rather than
whole responses. Three independent annotators, no arbitration pass and no annotator seeing another's labels,
the label taken as the majority of the three, with Fleiss' $\kappa$ and raw pairwise agreement
reported per class \citep{cohen1960kappa,fleiss1971kappa}. Figure~\ref{fig:blinding} contrasts the two designs. The full protocol,
codebook, and sampling frame are in Appendix~\ref{app:protocol}; the frozen specification is
\texttt{docs/\allowbreak TMLR\_\allowbreak EXPERIMENT\_\allowbreak PLAN.md} C3.

\paragraph{The headline would be the difference.} A per-class precision/recall table against the
blinded set alone would be a fact about one corpus's labels. $\Delta$ (how much validating on
the inherited set overstates accuracy) is the quantity that transfers, and it is the number
this design exists to produce.

\deferred{$\Delta$ itself, and $\vpow$ of both sets on identical terms. What blocks it is neither compute nor access but a second human annotation round the size of the one reported above, over a four-class codebook. The design, the protocol and the sampling frame are given in full (Appendix~\ref{app:protocol}) so that a reader can run it against their own label set: the design is the part that transfers.}

\section{Discussion and limitations}
\label{sec:discussion}

\subsection{What a judge author can do on Monday}
\label{sec:checklist}

\begin{enumerate}[leftmargin=1.4em,itemsep=2pt]
\item State how each label was obtained. If it is a function of a generative condition, say so.
\item Report the best impoverished predictor's agreement, not only the judge's.
\item Report a human ceiling with an interval, and report validation power against it.
\item Report a label-permutation null.
\item Probe in both directions: construct edits and surface edits.
\item If the label set is released, release the provenance field, so others can run 1--5.
\end{enumerate}

\noindent

\subsection{Limitations}
\label{sec:limitations}

\textbf{$\Rsens$ is a lower bound and $\Sinv$ an upper bound.} Everything rests on a $\Tcon$ edit
really changing the correct verdict; Protocol~\ref{prot:direction} puts that decision with humans,
and an item a majority of the \vnv{annRaters}\ misread enters the arm as a false positive that
lowers measured $\Rsens$. A judge may also register that a claim changed and still return the same
verdict because the verdict space is too coarse to express it. In the other direction, $\Sinv$ over
\vnv{annControlPassedSlots}\ surviving control types bounds invariance from above, since a richer
control family can only find more failures. Both bounds point the same way as the headline, which
is why \S\ref{sec:res-axes} reads \vnv{gapObs}\ as conservative.

\textbf{A third construct axis is visible at the edge of the control arm.} \vnv{annSlotsBelowBar}\
of the \vnv{nControlSlots}\ specified control types fell below the agreement bar, and the reason is
substantive rather than noise: one competent reader in three holds that making a referent more
precise changes what a sentence claims of it. If that reader is right, \emph{adding detail} is a
third axis alongside extension and precision, and it is a well-posed target for the next study. The
disagreement is localised, since the remaining control types agree at
$\ge\vnv{annControlPassedMin}$, so it is elaboration specifically and not surface-preservation in
general that is contested.

\textbf{The decomposition is a modelling choice, and not the only one the data supports.} It
accounts for $\eta^2=\vnv{audEtaReg}$ of the variance in slot-level $\Rsens$; a post-hoc regrouping
by whether an edit changes an explicit scope element or a modality-and-degree marker accounts for
\vnv{audEtaPost}. We keep the registered split and disclose the alternative, since distinguishing
them needs a construct designed to separate the two.

\textbf{The frontier is elicited, and the elicitation is part of the instrument.} Cutting a judge's
own grade makes every row of Table~\ref{tab:judges} comparable on identical terms, and it also makes
the frontier a property of the judge \emph{under our elicitation}: a judge clustering its grades on
two values yields fewer distinguishable operating points than a finer readout would show, so
Table~\ref{tab:judges} carries the realised operating-point count and regrade agreement beside each
profile.

\textbf{The diagnostic's family and mode are choices.} $\vpow$ is defined against a fixed predictor
family, so every value is an \emph{upper} bound on soundness: a richer family could only raise
$\base$. Definition~\ref{def:vp} covers two modes because those are the two we needed, and
Corollary~\ref{cor:modes} says a third mode's $\base$ is not deducible from ours. One prediction of
Definition~\ref{def:pil} does not hold on our own data: labels further from the generative
condition should be harder to reach without the construct, yet \textsc{MT-Bench}'s human votes are
the most recoverable set in paired mode. Either the definition is incomplete or human preference on
open-ended answers is itself substantially a length preference, and we cannot separate the two here.

\textbf{Scope, and provenance.} The profile is measured on the domains of \S\ref{sec:probes} and
generalises no further; constructs whose minimal edits are not well posed lie outside what this
method can measure. The paradox result is correlational, making the decomposition a
\emph{sufficient} explanation of \citet{peters2025generalization}'s finding rather than a mechanism.
The resource whose label set we audit was constructed by an overlapping author set, so we lead with
a result about judges and use no field absent from the public release, in particular no internally
retained pair-type label. Every number is against judge versions pinned in
Appendix~\ref{app:judges}.

\paragraph{Artifact availability.}
The diagnostic harness, the blinded label set, the edit taxonomy with its acceptance gates, and
the frozen judge versions are released. The corpus that supplies probe items and C4's case study is public
\citep{tridehallecp2026}; it is material here, cited, and this paper would stand with it
replaced by any resource carrying per-sentence scope annotations.

\section{Conclusion}
\label{sec:conclusion}

\begin{claimbox}
\noindent\textbf{The shortest true summary.} An evaluator is a measurement instrument, and validity
needs two properties rather than one: a verdict must hold still under an edit that preserves the
construct, \emph{and} move under a minimal edit that changes it. The judge literature has measured
the first thoroughly and the second, to our knowledge, not at all. Measured as a profile over
\vnv{profJudges}\ judges and \vnv{profDomains}\ domains with direction set by \vnv{annRaters}\
humans at agreement \vnv{annRaw}, judges sit at $\Sinv = \vnv{profSinvMean}$ against
$\Rsens = \vnv{profRsensMean}$: \emph{reliable, and markedly less responsive to construct change
than their reliability suggests}. The failure is structured rather than random. Sensitivity differs
by \vnv{profGapMean}\ between \scopeax\ and \strengthax\ at matched $\Sinv$, in that direction on
every one of the \vnv{profGapN}\ judges and against the only length bias we can measure, so the gap
is a lower bound; and the two axes respond with \emph{opposite} sign to accuracy pressure, which is
why prompting a generator to be accurate can make its overgeneralisation worse
\citep{peters2025generalization} without anything being incoherent. The reason none of this is in
the record is that the rulers leak: across \vnv{c4SetsAudited}\ public label sets, a predictor
reading only surface form, given the same paired input the judge gets, reproduces
\vnv{c4PairMin}--\vnv{c4PairMax}\ of their labels, including \vnv{c4PairSurfMtbench}\ of
\textsc{MT-Bench}'s \emph{human} votes. \emph{Whether a set leaks is a property not of the set alone
but of the set and the input mode together}, which is why Definition~\ref{def:vp} carries the mode.
\end{claimbox}

Three statements are worth carrying away separately from the result, because each is a claim about
practice. \emph{A high agreement number is not evidence of valid evaluation}; it is evidence of
invariance, which Proposition~\ref{prop:degenerate} shows a constant function attains perfectly.
\emph{An intervention that raises agreement has no predicted effect on sensitivity};
Proposition~\ref{prop:indep} says the coordinates are independent, and \S\ref{sec:res-profile} runs
ensembling and self-consistency as judge rows against exactly that prediction. \emph{And a
threshold comparison is not a validity comparison}: Proposition~\ref{prop:frontier} shows a judge
dominated everywhere on its frontier can still win at default thresholds, which is why every row of
Table~\ref{tab:judges} is read off a frontier we cut ourselves rather than off whatever operating
point a vendor shipped.

What we ask for is small, and it is not a new benchmark. Report $(\Sinv, \Rsens)$ where a judge
paper today reports agreement; report the best impoverished predictor's agreement beside the
judge's; and say how each label in the validation set was obtained, since
Definition~\ref{def:pil} is a fact about provenance that costs one sentence to disclose and cannot
be recovered afterwards. \S\ref{sec:checklist} is the six-line version.

The open question we would most like answered is whether \strengthax\ blindness is a property of
this construct or of judges in general. That needs a second construct with well-posed minimal
edits and cheap human verification, and the method transfers unchanged: the interventions, the
direction protocol and the frozen predictor family are released for it. The per-axis form of the
validation-power claim is a second piece of future work: it cannot be asked of any public row set,
since none is labelled by edit axis, and the intervention set we release is.

\bibliographystyle{plainnat}
\bibliography{references}

\appendix

\noindent The appendices carry what the argument does not need but a replication does: the full
intervention specification with its prompts, the annotation instruments, the pinned judge
configurations, the predictor family, the register of predicted values, and the secondary
experiments that inform the design without supporting a headline claim.

\section{Extended definitions and proofs}
\label{app:proofs}

\subsection{Measurability and the item distribution}

Definition~\ref{def:profile} writes $\Sinv$ and $\Rsens$ as probabilities over
$x\sim\mathcal{D}$ and $T$ drawn from an intervention family. Two details the main text
suppresses. First, the two probabilities are over \emph{different} product spaces
($\mathcal{D}\times\Tinv$ and $\mathcal{D}\times\Tcon$), so they are not two coordinates of one
joint distribution and no covariance between them is defined; this is what makes
Proposition~\ref{prop:indep} available. Second, $\mathcal{D}$ in practice is the empirical
distribution over base items that survived Protocol~\ref{prot:direction}, which is \emph{not}
the distribution over items a judge meets in deployment. Every profile in this paper is
therefore conditional on the surviving item set, and Appendix~\ref{app:secondary} reports how
much the profile moves when the survival filter is relaxed.

\subsection{The paired estimator, and why the bootstrap resamples base items}

For base items $x_1,\dots,x_n$ with $m_i$ interventions applied to $x_i$, the natural estimator
of $\Rsens$ is
\begin{equation}
\widehat{\Rsens} \;=\; \frac{1}{n}\sum_{i=1}^{n}\frac{1}{m_i}
  \sum_{j=1}^{m_i}\mathbf{1}\!\big[\Jud(x_i)\neq\Jud(T_{ij}x_i)\big],
\label{eq:est}
\end{equation}
which weights base items equally rather than interventions equally. The alternative (pooling
all $\sum_i m_i$ interventions) over-weights base items that happened to admit more edits,
and admissibility is not random: a sentence with several hedges admits several
\strengthax\ edits, so pooling would let hedge-dense sentences dominate the axis they are most
relevant to. Confidence intervals resample base items with replacement, never interventions
\citep{efron1979bootstrap}: interventions on one base item share that item's content and are not
independent draws. Nulls are label-permutation nulls over the same grouping
\citep{good2000permutation}, and where a table reports many rows we control the false discovery
rate across them \citep{benjamini1995fdr}.

\begin{figure}[t]
\centering
\includegraphics[width=\textwidth]{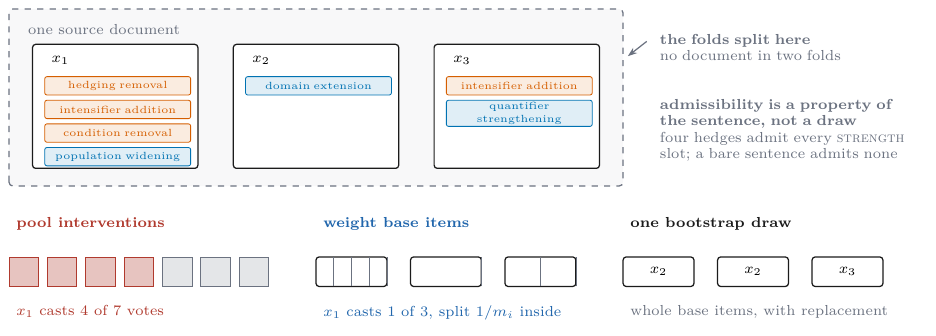}
\caption{The unit of analysis. \eqref{eq:est} weights base items equally and, within an item,
its interventions equally; pooling all interventions instead lets a hedge-dense sentence dominate
the axis it is most relevant to, since a sentence carrying several hedges admits every
\strengthax\ slot and a bare one admits none. Intervals resample whole base items with
replacement and never interventions, and folds split on the source document. The per-item counts
drawn here are illustrative; measured per-slot yields are in Table~\ref{tab:profile}.}
\label{fig:resampling}
\end{figure}

\subsection{Agreement statistics}

Judge-versus-human agreement is chance-corrected \citep{cohen1960kappa}; where the verdict space
is ordered we also report the linearly weighted variant \citep{cohen1968weighted}, because an
ordered space punishes an adjacent-class error and an opposite-class error identically under the
unweighted statistic, and the two are not equally bad for a scope judgement. Degenerate cases (both raters constant, so expected agreement is $1$ and $\kappa$ is $0/0$) are reported as
$0$ rather than as undefined, matching the harness implementation.

\section{The intervention families, in full}
\label{app:interventions}

\begin{table}[h]
\centering
\caption{Every intervention type, its class, and the axis it moves. Frozen in
\texttt{probes/\allowbreak edit\_\allowbreak taxonomy.py} before any judge was run; the assertion in that file fails if
a type is added without an axis, so the table and the code cannot drift.}
\label{tab:taxonomy}
\small
\input{figs/tab_taxonomy}
\end{table}

\paragraph{Why the control arm covers exactly these \vnv{nControlSlots}.} Three are register
manipulations. The other two exist because the judge literature has already shown verbosity
\citep{zheng2023judging} and formatting \citep{zhang2025formatbias} to move verdicts: a control
arm that omitted them would report a flattering $\Sinv$ on properties nobody had implicated, which
is the failure mode Proposition~\ref{prop:degenerate} warns about in its milder form. A sixth slot,
hedge padding, was specified and then removed: adding a hedge moves commitment, so it is a
\strengthax\ edit and a control arm containing it is not construct-preserving. The taxonomy was
internally contradictory and still looked fine as a list of names, which is why
Table~\ref{tab:taxonomy} is generated from the code rather than maintained beside it.

\paragraph{Generation and verification prompts.} The three templates below are reproduced
verbatim from \texttt{probes/\allowbreak edit\_\allowbreak taxonomy.py}, which is the single source of truth; braces
in the JSON schema are doubled there because the templates are Python format strings, and are
shown singly here. \texttt{\{edit\_type\}} is filled from Table~\ref{tab:taxonomy} and
\texttt{\{edit\_gloss\}} from the gloss column, so a template is never edited to accommodate a
slot. \texttt{\{tol\}} is the length tolerance, fixed at $10$ percent before generation.

\begin{promptbox}{Construct edit (treatment arm, $\Tcon$)}
You rewrite scientific sentences to make them claim MORE than their evidence supports, changing
nothing else.\medskip

Input sentence (well-calibrated: it states exactly what the evidence licenses): \verb|{sentence}|\\
Evidence context from the source article: \verb|{evidence}|\medskip

Apply EXACTLY ONE edit of type \verb|{edit_type}| (\verb|{edit_gloss}|).\medskip

Hard constraints. Violating any one of them makes the output useless:
\begin{enumerate}\itemsep0pt
\item Do NOT add, remove, or alter any factual content: no numbers changed, no entities swapped,
      no new findings introduced.
\item Do NOT introduce provenance language. Never write ``this study'', ``the authors'',
      ``Figure N'', ``in our data'', or any equivalent. Neither sentence may point at a source.
\item Keep the register identical. If the input is plain declarative prose, so is the output.
\item Keep the length within \verb|{tol}| percent of the input's token count. If your edit would
      shorten the sentence, compensate elsewhere WITHOUT adding content.
\item Change exactly one scope slot (the one named above) and leave the others alone.
\end{enumerate}
Return JSON only: \verb|{"edited": ..., "slot_changed": "{edit_type}", "what_widened": ...,|
\verb|"what_it_became": ..., "facts_preserved": true, "anchor_free": true}|
\end{promptbox}

\begin{promptbox}{Register edit (control arm, $\Tinv$)}
You rewrite scientific sentences to change their SURFACE FORM ONLY, leaving the claim and its
scope exactly as they were.\medskip

Input sentence: \verb|{sentence}|\medskip

Apply EXACTLY ONE edit of type \verb|{edit_type}| (\verb|{edit_gloss}|).\medskip

Hard constraints:
\begin{enumerate}\itemsep0pt
\item The claim must remain true of exactly the same population, conditions, and modality. Do
      not widen or narrow anything.
\item Do NOT add or remove factual content.
\item Do NOT introduce provenance language.
\item Keep the length within \verb|{tol}| percent of the input's token count.
\end{enumerate}
Return JSON only: \verb|{"edited": ..., "surface_change": "{edit_type}",|
\verb|"scope_unchanged": true, "facts_preserved": true, "anchor_free": true}|
\end{promptbox}

\begin{promptbox}{Verifier (family B), five gates}
Two sentences are given. Judge ONLY the following, independently of style or fluency.\medskip

A: \verb|{original}| \quad B: \verb|{edited}| \quad Evidence context: \verb|{evidence}|\medskip

Answer each question separately:
\begin{enumerate}\itemsep0pt
\item \texttt{facts\_identical}: do A and B assert the same factual content (same numbers,
      same entities, same findings) with nothing added or removed?
\item \texttt{scope\_relation}: is B's scope \texttt{broader}, \texttt{narrower}, or
      \texttt{same} relative to A?
\item \texttt{slot\_changed}: which scope slot differs, if any? One of
      \texttt{population\_\allowbreak widening}, \texttt{condition\_\allowbreak removal},
      \texttt{tense\_\allowbreak modality\_\allowbreak shift}, \texttt{quantifier\_\allowbreak strengthening},
      \texttt{hedging\_removal}, \texttt{domain\_extension}, or \texttt{none}.
\item \texttt{anchor\_\allowbreak free\_\allowbreak both}: is neither sentence pointing at a source (no ``this study'',
      no ``Figure N'', no ``the authors'')?
\item \texttt{register\_\allowbreak comparable}: are they in the same register, so that neither reads as
      more formal or more hedged than the other beyond the slot change?
\end{enumerate}
Return JSON only: \verb|{"facts_identical": <bool>, "scope_relation": ...,|
\verb|"slot_changed": ..., "anchor_free_both": <bool>, "register_comparable": <bool>,|
\verb|"notes": ...}|
\end{promptbox}

\paragraph{What the verifier gates, and what it does not.} A generated pair enters the
sensitivity arm only if the verifier returns \texttt{facts\_identical}, \texttt{anchor\_\allowbreak free\_\allowbreak both},
\texttt{register\_\allowbreak comparable}, and a \texttt{slot\_changed} equal to the requested slot; the
control arm additionally requires \texttt{scope\_relation} $=$ \texttt{same}. The verifier does
\emph{not} decide which member of a pair is the over-scoped one. That is the direction
assignment, it is made by human annotators (Appendix~\ref{app:protocol}), and it is separated
from generation and verification on purpose: a direction supplied by any model would make
$\Rsens$ a measurement of agreement between two models rather than of a judge against a
construct. Generation, verification, and judging draw on three disjoint model
families, so no family scores its own output.

\section{Annotation protocol and codebook}
\label{app:protocol}
This paper's human labels do one job: given a pair $(x, \Tcon x)$, the annotator decides which
member a correctly calibrated reader should prefer. Delegating that to a model would make $\Rsens$
an agreement statistic between two models, on which a judge sharing the direction-assigner's blind
spot scores as \emph{sensitive} exactly where both are blind.

\paragraph{Unit and blinding.} The unit is one pair, shown as sentences \textbf{A} and \textbf{B}
in randomised order with the evidence context and nothing else. Withheld: which member is the
original, which edit slot was requested, which axis it belongs to, the generating and verifying
model identities, and any batch-level composition cue. Presentation order is recorded, so a
systematic first-position preference is detectable rather than absorbed into the labels.
Treatment and control pairs are interleaved in one stream, since an annotator able to tell the
arms apart could infer the control arm's answer from the arm itself.

\begin{figure}[t]
\centering
\includegraphics[width=\textwidth]{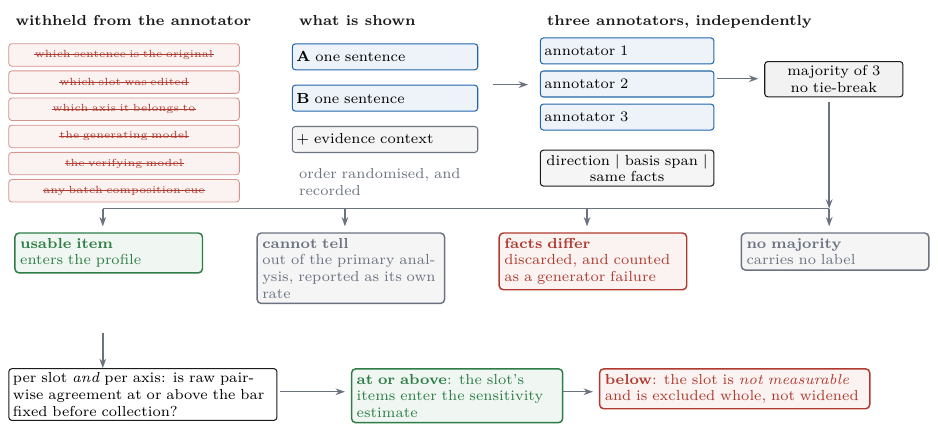}
\caption{One pair's route through Protocol~\ref{prot:direction}, and the five places it can stop.
Four outcomes are decided per item. The fifth is decided per slot \emph{and} per axis after
aggregation, against a bar fixed in \texttt{probes/\allowbreak edit\_\allowbreak taxonomy.py} before collection, and it
excludes the slot whole rather than widening an interval. Of \vnv{annItems}\ annotated pairs,
\vnv{annUsableConstruct}\ construct and \vnv{annUsableControl}\ control items survive,
\vnv{annNoMajority}\ carry no label and \vnv{annSlotsBelowBar}\ slots were excluded; per-slot
yields are in Table~\ref{tab:profile}. The \texttt{basis} field is a mandatory quotation of the
span that exceeds the evidence, which is what makes a disagreement inspectable rather than a
number, and presentation order is recorded so a systematic first-position preference is
detectable rather than absorbed into the labels.}
\label{fig:protocol}
\end{figure}

\paragraph{The judgement.} Three fields per pair. \texttt{direction} $\in$ \{\textbf{A} claims
more than the evidence licenses, \textbf{B} does, neither, cannot tell from this context\}.
\texttt{basis}: a mandatory one-sentence quotation of the exact span that exceeds what the
evidence supports, plus what the evidence does support. \texttt{same\_facts}: whether the two
sentences assert the same factual content, a check on the generator rather than a judgement about
scope. A pair marked \emph{cannot tell} leaves the primary analysis and is
reported as a separate rate; a pair where the annotators say the facts differ is discarded
outright and counted as a generator failure, because a probe built on it would measure
factuality.

\paragraph{Class definitions and the boundary cases that matter.} The four scope slots and three
strength slots of Table~\ref{tab:taxonomy} are defined for annotators through worked examples
rather than through the slot names, which are ours and not theirs. Three boundaries carry most of
the disagreement. \emph{Removal is expansion}: turning ``results suggest M may reduce noise'' into
``M reduces noise'' widens the claim although no word was added and nothing became false, and an
annotator reading only for added content will call such a pair a tie. \emph{Generality is not
always over-claiming}: a typicality quantifier the evidence supports, physical constants, unit
conversions and definitions are not over-scoped, and without this boundary an annotator drifts
toward marking every general-sounding sentence as the over-scoped one, inflating $\Rsens$ for
exactly the judges that share the heuristic. \emph{An appositive gloss is not a change of scope}:
rewriting ``aerosols'' as ``aerosols, the artificially designed particles'' names the same
referent more precisely without altering what is claimed of it. The third boundary was added
after collection, because the codebook as run did not answer the question and one annotator read
it the other way consistently (\S\ref{sec:direction}); the affected slots are reported as not
measurable rather than ruled on retrospectively.

\paragraph{Annotators, the majority rule, and the bar.} \vnv{annRaters} independent annotators
label every pair, none seeing another's labels, with domain competence in the pair's field and a
training round of shared items that is discussed and then discarded. The registered plan was two
annotators with a third arbitrating disagreements; all three labelled the full batch instead,
which is what makes a systematic disagreement distinguishable from a noisy one. The label is the
majority of three with no automatic tie-break, so an item on which all three differ carries no
label and the rate is reported (\vnv{annNoMajority} of \vnv{annItems}).

The pre-registered bar is a raw pairwise agreement of $\ge \vnv{humanAgreementMin}$, fixed in
\texttt{probes/\allowbreak edit\_\allowbreak taxonomy.py} before any data was collected and applied per axis and per
slot. Raw agreement rather than $\kappa$, because $\kappa$ is degenerate wherever one answer
legitimately dominates: \texttt{register\_shift} has raw agreement \vnv{annSlotRegKappaRaw} with
Fleiss' $\kappa$ \vnv{annSlotRegKappa}, since almost every label is \textsc{same}, and a
$\kappa$ bar would have discarded the cleanest slot in the arm. Generated tables flag every cell
where one category holds $\ge 90\%$ of the labels.

A slot or axis below the bar is reported as \textbf{not measurable} and excluded from $\Rsens$
and from the axis comparison rather than reported with a wider interval: the question is whether
the construct is well defined for humans at all, and a wide interval around an ill-defined
construct still asserts it exists.

\paragraph{Recruitment and consent.} Annotators are paid contributors, not volunteers or students
of the authors, compensated at or above the local hourly minimum for their jurisdiction. They are
told what the data will be used for, that their labels are released per item without their identity
attached, and that they may withdraw. No task presents personal data or content selected to be
distressing.

\paragraph{Two requirements on any span-quoting field.} Both cost us a batch of correct labels
before they were understood. \emph{Normalise typography before comparing}: source sentences carry
non-breaking hyphens, en dashes and curly quotes, an annotator who retypes produces the ASCII
forms, and an unnormalised substring test rejects a correct label over one character.
\emph{Accept more than one span}: an edit can have two loci that are contiguous nowhere, and a
single-span field cannot express it. A validation rule stricter than the property it checks
rejects correct work silently.

\section{Judge versions and decision protocols}
\label{app:judges}

Every judge is reached through a single aggregating API, which is what makes a multi-family study
tractable and the anti-circularity requirement of Protocol~\ref{prot:direction} easy to satisfy:
disjoint vendors are strings in a config rather than procurement problems.

\begin{warnbox}
\noindent\textbf{An aggregator makes ``pinned model version'' harder, not easier.} A model
\emph{identifier} at an aggregating API is not a model \emph{instance}: the same identifier can be
served by several upstream providers at different quantisations and with different sampling
implementations, and the default is to route to whichever is available and fall back silently.
A run that pins the judge version while accepting default routing has pinned a name, not an
instrument, and two rows of Table~\ref{tab:judges} could differ by provider rather than by judge.
We therefore pin the \emph{provider} as well as the model, disable fallback, and record the
resolved upstream per request; requests resolving to a different provider are discarded and
re-issued. The count of discards is reported, because it is a fact about the measurement
apparatus and hiding it would misrepresent how reproducible these numbers are.
\end{warnbox}

\noindent Quantisation is the specific hazard worth naming, because it is invisible and
directional: a more aggressively quantised serving of the same weights is a different instrument
for our purposes, and there is no reason its $(\Sinv,\Rsens)$ should match. Where an upstream
does not disclose its quantisation we say so in the table rather than assuming parity.

Every judge is run at temperature $0$ on the same frozen prompt, with the provider pinned through
the routing layer and the resolved provider recorded per call.

\paragraph{What the elicitation pilot did to Remark~\ref{rem:graded}.} The remark asserted that a
graded scale buys resolution over a binary verdict, and a pilot showed that it does not follow
automatically. The two failure modes are opposite: a judge can be perfectly repeatable and still
emit effectively one decision, and another can offer many usable cuts while reproducing its own
grade on the same item a small fraction of the time at temperature $0$. Both appear in the
measured roster, where usable operating points span \vnv{profPtsMin} to \vnv{profPtsMax} and
regrade agreement \vnv{profRegradeMin} to \vnv{profRegradeMax} (Table~\ref{tab:judges}). A study
reporting only \emph{distinct grades} would have called every one of them fine. The consequence is
that the score-to-verdict rule is fixed once for all judges and the regrade agreement is reported
beside every profile rather than folded into its interval.

\begin{figure}[t]
\centering
\input{figs/plot_instrument}
\caption{Instrument quality per judge, and the two independent ways a graded judge fails. Left:
the grade values a judge emits, against those surviving the one-percent usability cut, so the
segment is resolution the judge appears to offer and does not deliver. Right: whether it
reproduces its own grade on the same item at temperature $0$. Rows are sorted by usable points
and the regrade column does not follow that order, so neither quantity predicts the other. Every
judge here emits at least seven distinct grades, which is why a study reporting distinct grades
would have called all of them fine.}
\label{fig:instrument}
\end{figure}

\section{The impoverished predictor family, in full}
\label{app:impoverished}

The family is deliberately weak and deliberately frozen: weak so that every reported $\vpow$ is
an upper bound and therefore generous to the label set, frozen so that $\vpow$ does not change
meaning between papers. The harness asserts the member set, and a test fails if a predictor is
added.

\begin{table}[h]
\centering
\caption{The three members. Feature counts and fitting procedure are fixed in
\texttt{analysis/\allowbreak validation\_\allowbreak power.py}; group-aware cross-validation splits on source document
so no document appears in both folds.}
\label{tab:impoverished}
\small
\begin{tabular}{@{}l p{58mm} p{46mm}@{}}
\toprule
predictor & sees & does not see \\
\midrule
\texttt{length\_only}   & response length in characters & any token identity \\
\texttt{numeric\_surface} & counts of digits, punctuation, and casing patterns & any word \\
\texttt{surface\_ngram} & character $n$-grams, $n\le 4$ & sentence structure, evidence context \\
\bottomrule
\end{tabular}
\end{table}

\paragraph{Why these three.} The members are ordered by how much of the surface they see, so a
label set's failure locates itself. If \texttt{length\_only} suffices, the set is separable by a
quantity nobody would defend as the construct. If \texttt{numeric\_surface} is needed, the leak is
in formatting or numeric density, the signature of a set whose two conditions differed in how they
render evidence. If only \texttt{surface\_ngram} succeeds, the leak is lexical: a generative
condition left an idiom behind. The three answers imply different repairs, which is why the
per-member breakdown is reported rather than only $\base = \max$.

Every member is a function of the response text alone, and none is shown the evidence context, so
none can compute the relation between claim and evidence. That is what makes $\base$
interpretable: a member reaching the human ceiling has not solved the task, it has shown the task
was not being asked. The invariant is structural, since the extractors take a single string and no
code path delivers the evidence to them.

\paragraph{Fitting.} Cross-validation is group-aware on the source document, because two items
from one document share topic, register and often exact phrasing, so a random split lets a
predictor recognise the document and recover the label through it. Each member is refit on labels
permuted within group under identical folds; refitting rather than assuming $\tfrac12$ also
catches a broken harness, since a permutation null above chance means the pipeline leaks. No
hyperparameter search is performed: a tuned impoverished predictor would make $\base$ a function
of how hard we tried. Every member ends in \texttt{LogisticRegression(max\_iter=2000,
class\_weight=\allowbreak"balanced")}, weighted because several audited sets are imbalanced and an
unweighted classifier can reach high accuracy at $\kappa$ near zero.

\begin{description}[leftmargin=1.4em,itemsep=3pt]
\item[\texttt{length\_only}.] One feature, the whitespace token count, standardised. It isolates
      the single surface property the preference literature repeatedly finds correlated with human
      choice \citep{zheng2023judging}.
\item[\texttt{numeric\_surface}.] Ten standardised scalars: character count; whitespace token
      count; count of \texttt{.,;:}; count of \texttt{(}; newline count; hedge-marker rate;
      universal-quantifier rate; digit rate; uppercase rate; mean token length. The two rate
      features use closed hand-written lists, hedges \{\emph{may, might, could, appears, suggests,
      possibly, likely, generally, typically, often, tends to, in some cases}\} and universals
      \{\emph{all, always, never, every, any, invariably, universally}\}, counted
      case-insensitively as substrings over the token count. The lists are closed so that this
      member cannot drift into a content classifier as vocabulary grows.
\item[\texttt{surface\_ngram}.] Character $n$-grams, word-boundary aware, $n\in[3,5]$, minimum
      document frequency 3, at most 20{,}000 features, sublinear term frequency, no scaler. Character $n$-grams carry some content, which is
      why this member is reported separately: recovering a set that \texttt{length\_only} does not
      means separable by \emph{style}, a weaker and different claim.
\end{description}

\paragraph{Folds and scoring.} Five folds, \texttt{GroupKFold} on the group key where the set has
at least five distinct groups and \texttt{KFold(shuffle=True)} with a fixed seed otherwise; the
output records which was used, since a $\base$ computed without grouping is not comparable to one
computed with it. A predictor's score is the $\kappa$ of the \emph{pooled} out-of-fold
predictions, not the mean of per-fold $\kappa$s: $\kappa$ is not linear in the confusion matrix,
and the pooled quantity is what Definition~\ref{def:vp} is written against. $\base$ is
chance-corrected, $\ceil$ carries its own interval, and where the two overlap we report $\vpow$ as
\emph{not resolvable}. Each member is also fit on one set and evaluated on the others, because
within-set recovery can be a set-specific idiom rather than a general surface correlate
\citep{arasteh2026evidence}.

\section{Register of predicted values}
\label{app:predicted}

Every value in this paper marked \predkey\ is listed here with the reasoning that produced it.
This appendix exists so that the predictions are falsifiable as a set rather than adjustable one
at a time after the fact: a prediction recorded before measurement is a commitment, and a
prediction recorded afterwards is a description.

Table~\ref{tab:predreg} is generated from \texttt{docs/\allowbreak PREDICTED\_\allowbreak VALUES.md} by
\texttt{analysis/\allowbreak make\_\allowbreak predicted\_\allowbreak register.py}, which also fails the build if the paper renders
a \predkey\ value the register does not explain. Transcribing it by hand would defeat its
purpose: the appendix is only a commitment device if it cannot drift from the file the
commitments were written in.

Two properties of the register are worth stating because they constrain how the rest of the
paper may argue. First, \textbf{no sentence in this paper argues \emph{from} a predicted
value.} Where a section's design depends on one, the dependence is named in the prose. Second,
predictions exist here for a mechanical reason rather than a rhetorical one: a plot needs
coordinates to render, and there is no hole-marker equivalent for a data point, so the choice for a
figure is between a marked prediction and no figure at all. Given that choice, the useful move
is to record the prediction with its refutation condition attached and let the experiment kill
it. Every such figure carries a \textsc{predicted} watermark drawn inside the axis rather than
in the caption, because a caption disclaimer does not survive the figure being screenshotted
into a slide.

This appendix is a defect that shrinks. Each line disappears when its measurement lands, and
the appendix disappears with the last one.

\input{figs/tab_predicted_register}

\section{What is not measured here}
\label{app:notmeasured}

\input{figs/tab_notmeasured}

\section{Secondary experiments and background work}
\label{app:secondary}

None of the following supports a headline claim. Each rules out an alternative explanation, bounds
a design choice, or records a path considered and rejected.

\paragraph{S1, S3, S6: robustness checks specified and not run.} Three checks are named in
Table~\ref{tab:notmeasured} with the reason each awaits. \emph{Reliability remedies} (S1):
Proposition~\ref{prop:indep} predicts that self-consistency voting and a judge ensemble raise
$\Sinv$ without moving $\Rsens$; both coordinates are reported for every remedy row, never
$\Sinv$ alone. \emph{Edit magnitude} (S2) has run and is reported in
\S\ref{sec:res-axes}; the one thing it cannot do is stratify, because operation fixes magnitude. \emph{Verdict granularity} (S3) asks how much insensitivity survives a
finer verdict space. The elicitation pilot already answers half of it, and not in the direction the
check assumed: a finer space does not automatically buy resolution, since one judge yields
\vnv{profPtsMin} usable operating points and another reproduces its own grade on the same item
\vnv{profRegradeMin} of the time at temperature $0$. ``Repeat with a graded score'' is therefore
not a strictly more informative measurement. \emph{Item-survival sensitivity} (S6) recomputes the
profile at relaxed agreement bars; if the asymmetry grows as the bar relaxes it is partly an
artefact of which items survive.

\paragraph{S4. A free replication of the asymmetry, from the pair-construction pipeline.} Building
the intervention set requires a verifier model to check that each edit preserved the facts, the
anchor-free property and the register (Appendix~\ref{app:interventions}). That verifier is also
asked, for its own record, whether the edited sentence claims more than the original: the same
question put to the annotators, put to a model.

It notices \vnv{vfDetectScope} of \scopeax\ edits (\vnv{vfNScope} pairs) and
\vnv{vfDetectStrength} of \strengthax\ edits (\vnv{vfNStrength} pairs), a factor of
\vnv{vfDetectRatio}. Where it does register a change it assigns the same axis we do
\vnv{vfAxisAgree} of the time (\vnv{vfAxisAgreeN} pairs), so the gap is not an artefact of
disagreeing about what the axes mean.

\begin{warnbox}
\noindent\textbf{This is not $\Rsens$.} The verifier is not a judge, the task is not the judge's
task, and no human assigned direction to these pairs, so there is no ground truth here: only a
model's agreement with our construction. It is reported as an independent instrument, built for
another purpose, showing the predicted asymmetry at no additional cost.

The verifier's own insensitivity is why direction is a human decision. An earlier build gated the
construct arm on the verifier agreeing that an edit widened the claim, and that gate rejected 12
of 12 verified strength edits, each with a note conceding the change and denying it mattered. Such
a gate keeps only the strength edits a model can already see, so every judge would then be scored
on a set pre-filtered for detectability. The verifier's opinion is recorded rather than enforced.
\end{warnbox}

\paragraph{S5. Alternative decompositions considered and rejected.} Before settling on
\scopeax/\allowbreak\strengthax\ we considered a single graded overreach magnitude, rejected because it cannot
express the opposite-sign prediction of \S\ref{sec:paradox}; a three-way split separating
quantifier from temporal generalisation, rejected because almost every temporal generalisation in
the domain's writing conventions also strengthens a quantifier, so the two are not separable by a
minimal edit; and a split by \emph{evidence} type rather than claim operation, rejected because it
makes membership depend on the source document and so cannot be verified from the pair alone. The
chosen decomposition is a modelling choice, and a reader should know what it was chosen against.

\end{document}

%% file: preamble.tex
\usepackage[a4paper,margin=1in]{geometry}
\usepackage[T1]{fontenc}
\usepackage[utf8]{inputenc}
\usepackage{lmodern}
\usepackage{amsmath,amssymb,amsthm}
\usepackage{mathtools}
\usepackage{graphicx}
\usepackage{array}
\usepackage{booktabs}
\usepackage{longtable}
\usepackage{tabularx}
\usepackage{multirow}
\usepackage{float}
\usepackage[section]{placeins}
\usepackage{authblk}
\usepackage[dvipsnames]{xcolor}
\usepackage[most]{tcolorbox}
\usepackage{microtype}
\usepackage{enumitem}
\usepackage[numbers,sort&compress]{natbib}
\usepackage{caption}
\usepackage{abstract}
\usepackage{tikz}
\usepackage{pgfplots}
\pgfplotsset{compat=1.18}
\usetikzlibrary{arrows.meta,positioning,fit,backgrounds,patterns,calc,
  shapes.geometric,decorations.pathreplacing,decorations.markings}
\usepackage[colorlinks=true,linkcolor=NavyBlue,citecolor=ForestGreen,urlcolor=NavyBlue]{hyperref}
\usepackage{xurl}

\renewcommand{\arraystretch}{1.2}
\newcolumntype{L}{>{\raggedright\arraybackslash}X}

\newtheorem{proposition}{Proposition}

\newtheorem{corollary}{Corollary}
\theoremstyle{definition}
\newtheorem{definition}{Definition}

\newtheorem{protocol}{Protocol}
\theoremstyle{remark}
\newtheorem{remark}{Remark}

\newtcolorbox{claimbox}{breakable,colback=ForestGreen!4,colframe=ForestGreen!55,boxrule=1pt,
  arc=3pt,left=8pt,right=8pt,top=6pt,bottom=6pt}
\newtcolorbox{warnbox}{breakable,colback=BrickRed!4,colframe=BrickRed!55,boxrule=1pt,
  arc=3pt,left=8pt,right=8pt,top=6pt,bottom=6pt}
\newtcolorbox{firewallbox}{breakable,colback=Goldenrod!8,colframe=Goldenrod!70,boxrule=1pt,
  arc=3pt,left=8pt,right=8pt,top=6pt,bottom=6pt}

\newtcolorbox{promptbox}[1]{breakable,colback=black!2,colframe=black!45,boxrule=0.7pt,
  arc=2pt,left=7pt,right=7pt,top=5pt,bottom=5pt,fonttitle=\bfseries\small,
  title={#1},coltitle=white,colbacktitle=black!55,
  before upper=\small\setlength{\parskip}{2pt}}

\newcommand{\deferred}[1]{\emph{Not measured here.} #1}
\usepackage[normalem]{ulem}

\newcommand{\Clean}{\textsc{clean}}
\newcommand{\Scoped}{\textsc{over-scoped}}
\newcommand{\Anchored}{\textsc{anchored}}
\newcommand{\Unsup}{\textsc{unsupported}}
\newcommand{\vpow}{\mathrm{VP}}          
\newcommand{\ceil}{\mathrm{a}_{\mathrm{hum}}}   
\newcommand{\base}{\mathrm{a}_{\mathrm{imp}}}   
\newcommand{\Lset}{\mathcal{L}}          
\newcommand{\Cgen}{C}                    

\newcommand{\Jud}{J}                     
\newcommand{\Crit}{c}                    
\newcommand{\Tinv}{T_{\mathrm{I}}}       
\newcommand{\Tcon}{T_{\mathrm{C}}}       
\newcommand{\Sinv}{S}                    
\newcommand{\Rsens}{R}                   
\newcommand{\Vprof}{\mathbf{V}}          
\newcommand{\pnv}[1]{%
  \ifcsname pnv-#1\endcsname
    \textcolor{figPred}{\csname pnv-#1\endcsname\textsuperscript{\dag}}%
  \else\textcolor{BrickRed}{\textbf{??pred:#1}}\fi}
\newcommand{\predkey}{\textcolor{figPred}{\textsuperscript{\dag}}\,predicted, not measured}

\newcommand{\scopeax}{\textsc{scope}}
\newcommand{\strengthax}{\textsc{strength}}

%% file: palette.tex
\definecolor{figA}{HTML}{0072B2}
\definecolor{figB}{HTML}{D55E00}
\definecolor{figC}{HTML}{009E73}
\definecolor{figD}{HTML}{CC79A7}
\definecolor{figE}{HTML}{56B4E9}
\definecolor{figF}{HTML}{E69F00}
\definecolor{figG}{HTML}{999999}
\definecolor{figInk}{HTML}{1A1A1A}
\definecolor{figSoft}{HTML}{5A5A5A}
\definecolor{figGrid}{HTML}{D8D8D4}
\definecolor{figPred}{HTML}{B03A2E}

%% file: figs/vwv_numbers.tex
\newcommand{\vnv}[1]{%
  \ifcsname vnv-#1\endcsname\csname vnv-#1\endcsname
  \else\textcolor{BrickRed}{\textbf{??#1}}\fi}
\expandafter\def\csname vnv-relVerbosityBias\endcsname{0.011}
\expandafter\def\csname vnv-relJudges\endcsname{21}
\expandafter\def\csname vnv-relJudgments\endcsname{541{,}000}
\expandafter\def\csname vnv-humanAgreementMin\endcsname{0.75}
\expandafter\def\csname vnv-nEditSlots\endcsname{7}
\expandafter\def\csname vnv-nScopeSlots\endcsname{4}
\expandafter\def\csname vnv-nStrengthSlots\endcsname{3}
\expandafter\def\csname vnv-nControlSlots\endcsname{5}
\expandafter\def\csname vnv-profJudges\endcsname{7}
\expandafter\def\csname vnv-profJudgesReached\endcsname{7}
\expandafter\def\csname vnv-profSinvTarget\endcsname{0.90}
\expandafter\def\csname vnv-profPairs\endcsname{342}
\expandafter\def\csname vnv-profNCons\endcsname{246}
\expandafter\def\csname vnv-profNCtrl\endcsname{96}
\expandafter\def\csname vnv-profDomains\endcsname{4}
\expandafter\def\csname vnv-profRsensMean\endcsname{0.319}
\expandafter\def\csname vnv-profScopeMean\endcsname{0.383}
\expandafter\def\csname vnv-profStrengthMean\endcsname{0.262}
\expandafter\def\csname vnv-profSinvMean\endcsname{0.945}
\expandafter\def\csname vnv-profGapMean\endcsname{+0.121}
\expandafter\def\csname vnv-profGapMin\endcsname{+0.003}
\expandafter\def\csname vnv-profGapMax\endcsname{+0.222}
\expandafter\def\csname vnv-profGapPos\endcsname{7}
\expandafter\def\csname vnv-profGapN\endcsname{7}
\expandafter\def\csname vnv-profBestJudge\endcsname{anthropic/claude-haiku-4.5}
\expandafter\def\csname vnv-profBestRsens\endcsname{0.561}
\expandafter\def\csname vnv-gapObs\endcsname{+0.121}
\expandafter\def\csname vnv-gapLo\endcsname{+0.052}
\expandafter\def\csname vnv-gapHi\endcsname{+0.189}
\expandafter\def\csname vnv-gapBaseItems\endcsname{243}
\expandafter\def\csname vnv-gapPerm\endcsname{$p<0.002$}
\expandafter\def\csname vnv-gapNullMax\endcsname{+0.123}
\expandafter\def\csname vnv-gapUnan\endcsname{+0.125}
\expandafter\def\csname vnv-gapUnanN\endcsname{217}
\expandafter\def\csname vnv-profCtrlFPR\endcsname{0.055}
\expandafter\def\csname vnv-profPtsMin\endcsname{2}
\expandafter\def\csname vnv-profPtsMax\endcsname{9}
\expandafter\def\csname vnv-profRegradeMin\endcsname{0.412}
\expandafter\def\csname vnv-profRegradeMax\endcsname{1.000}
\expandafter\def\csname vnv-profRegradeWorstJudge\endcsname{anthropic/claude-haiku-4.5}
\expandafter\def\csname vnv-profRegradeWorstPts\endcsname{9}
\expandafter\def\csname vnv-profFlatJudges\endcsname{1}
\expandafter\def\csname vnv-profFlatJudge\endcsname{z-ai/glm-4.7-flash}
\expandafter\def\csname vnv-audMagScope\endcsname{5.0\%}
\expandafter\def\csname vnv-audMagStrength\endcsname{5.3\%}
\expandafter\def\csname vnv-audSignScope\endcsname{+4.1\%}
\expandafter\def\csname vnv-audSignStrength\endcsname{-2.5\%}
\expandafter\def\csname vnv-audLongerScope\endcsname{64\%}
\expandafter\def\csname vnv-audLongerStrength\endcsname{39\%}
\expandafter\def\csname vnv-audCorrMin\endcsname{+0.025}
\expandafter\def\csname vnv-audCorrMax\endcsname{+0.114}
\expandafter\def\csname vnv-audGapLong\endcsname{+0.149}
\expandafter\def\csname vnv-audGapLongPos\endcsname{6}
\expandafter\def\csname vnv-audGapLongN\endcsname{7}
\expandafter\def\csname vnv-audGapShort\endcsname{+0.073}
\expandafter\def\csname vnv-audGapShortPos\endcsname{6}
\expandafter\def\csname vnv-audGapShortN\endcsname{7}
\expandafter\def\csname vnv-audBestSlot\endcsname{quantifier\_strengthening}
\expandafter\def\csname vnv-audBestSlotR\endcsname{0.445}
\expandafter\def\csname vnv-audWorstSlot\endcsname{hedging\_removal}
\expandafter\def\csname vnv-audWorstSlotR\endcsname{0.210}
\expandafter\def\csname vnv-audCrossSlot\endcsname{condition\_removal}
\expandafter\def\csname vnv-audCrossSlotR\endcsname{0.379}
\expandafter\def\csname vnv-audCrossBeats\endcsname{2}
\expandafter\def\csname vnv-audEtaReg\endcsname{0.40}
\expandafter\def\csname vnv-audEtaPost\endcsname{0.50}
\expandafter\def\csname vnv-audGapPost\endcsname{+0.136}
\expandafter\def\csname vnv-annRaters\endcsname{3}
\expandafter\def\csname vnv-annItems\endcsname{478}
\expandafter\def\csname vnv-annRaw\endcsname{0.852}
\expandafter\def\csname vnv-annKappa\endcsname{0.776}
\expandafter\def\csname vnv-annPairMin\endcsname{0.814}
\expandafter\def\csname vnv-annPairMax\endcsname{0.877}
\expandafter\def\csname vnv-annScopeRaw\endcsname{0.898}
\expandafter\def\csname vnv-annScopeN\endcsname{144}
\expandafter\def\csname vnv-annStrengthRaw\endcsname{0.889}
\expandafter\def\csname vnv-annStrengthN\endcsname{144}
\expandafter\def\csname vnv-annControlRaw\endcsname{0.789}
\expandafter\def\csname vnv-annControlN\endcsname{190}
\expandafter\def\csname vnv-annNoMajority\endcsname{2}
\expandafter\def\csname vnv-annNoMajorityPct\endcsname{0.4\%}
\expandafter\def\csname vnv-annSlotsBelowBar\endcsname{2}
\expandafter\def\csname vnv-annSlotElaborationRaw\endcsname{0.404}
\expandafter\def\csname vnv-annSlotVerbosityPaddingRaw\endcsname{0.689}
\expandafter\def\csname vnv-annControlPassedMin\endcsname{0.971}
\expandafter\def\csname vnv-annControlPassedSlots\endcsname{3}
\expandafter\def\csname vnv-annSlotRegName\endcsname{register\_shift}
\expandafter\def\csname vnv-annSlotRegKappaRaw\endcsname{0.976}
\expandafter\def\csname vnv-annSlotRegKappa\endcsname{0.324}
\expandafter\def\csname vnv-annConstructAnnotated\endcsname{288}
\expandafter\def\csname vnv-annControlAnnotated\endcsname{190}
\expandafter\def\csname vnv-annUsableConstruct\endcsname{246}
\expandafter\def\csname vnv-annUsableControl\endcsname{96}
\expandafter\def\csname vnv-annControlConfirmsBeforeExcl\endcsname{150}
\expandafter\def\csname vnv-annUsableControl\endcsname{96}
\expandafter\def\csname vnv-annUsableScope\endcsname{116}
\expandafter\def\csname vnv-annUsableStrength\endcsname{130}
\expandafter\def\csname vnv-annTenseYield\endcsname{57.8\%}
\expandafter\def\csname vnv-annTenseUsable\endcsname{26}
\expandafter\def\csname vnv-annTenseN\endcsname{45}
\expandafter\def\csname vnv-annYieldMin\endcsname{57.8\%}
\expandafter\def\csname vnv-annYieldMinSlot\endcsname{tense\_modality\_shift}
\expandafter\def\csname vnv-xferPairs\endcsname{20}
\expandafter\def\csname vnv-xferDefined\endcsname{2}
\expandafter\def\csname vnv-xferUndefined\endcsname{18}
\expandafter\def\csname vnv-xferMax\endcsname{0.157}
\expandafter\def\csname vnv-xferMin\endcsname{0.109}
\expandafter\def\csname vnv-pipelineDiscard\endcsname{24.2\%}
\expandafter\def\csname vnv-pipelineKept\endcsname{5{,}539}
\expandafter\def\csname vnv-annGateFlagged\endcsname{47}
\expandafter\def\csname vnv-annGateOf\endcsname{190}
\expandafter\def\csname vnv-annGateFlagDir\endcsname{59.6\%}
\expandafter\def\csname vnv-annGateCleanDir\endcsname{7.0\%}
\expandafter\def\csname vnv-annPadPreN\endcsname{46}
\expandafter\def\csname vnv-annPadPreDir\endcsname{67.4\%}
\expandafter\def\csname vnv-annPadPreAgree\endcsname{0.601}
\expandafter\def\csname vnv-annPadPreCallsHi\endcsname{37}
\expandafter\def\csname vnv-annPadPreCallsLo\endcsname{12}
\expandafter\def\csname vnv-annPadPostN\endcsname{46}
\expandafter\def\csname vnv-annPadPostDir\endcsname{10.9\%}
\expandafter\def\csname vnv-annPadPostAgree\endcsname{0.572}
\expandafter\def\csname vnv-annPadPostCallsHi\endcsname{32}
\expandafter\def\csname vnv-annPadPostCallsLo\endcsname{3}
\expandafter\def\csname vnv-vfDetectScope\endcsname{24.1\%}
\expandafter\def\csname vnv-vfNScope\endcsname{957}
\expandafter\def\csname vnv-vfDetectStrength\endcsname{8.8\%}
\expandafter\def\csname vnv-vfNStrength\endcsname{794}
\expandafter\def\csname vnv-vfDetectRatio\endcsname{2.7}
\expandafter\def\csname vnv-vfAxisAgree\endcsname{82\%}
\expandafter\def\csname vnv-vfAxisAgreeN\endcsname{301}
\expandafter\def\csname vnv-c4SetsAudited\endcsname{5}
\expandafter\def\csname vnv-c4AimpMin\endcsname{0.07}
\expandafter\def\csname vnv-c4AimpMax\endcsname{0.30}
\expandafter\def\csname vnv-c4AimpMedian\endcsname{0.22}
\expandafter\def\csname vnv-c4ItemsTotal\endcsname{37{,}627}
\expandafter\def\csname vnv-c4AimpRewardbench\endcsname{0.30}
\expandafter\def\csname vnv-c4LenRewardbench\endcsname{0.11}
\expandafter\def\csname vnv-c4AimpRewardbench2\endcsname{0.20}
\expandafter\def\csname vnv-c4LenRewardbench2\endcsname{0.07}
\expandafter\def\csname vnv-c4AimpRmbench\endcsname{0.22}
\expandafter\def\csname vnv-c4LenRmbench\endcsname{0.02}
\expandafter\def\csname vnv-c4AimpHelpsteer2\endcsname{0.07}
\expandafter\def\csname vnv-c4LenHelpsteer2\endcsname{0.03}
\expandafter\def\csname vnv-c4AimpMtbench\endcsname{0.30}
\expandafter\def\csname vnv-c4LenMtbench\endcsname{0.08}
\expandafter\def\csname vnv-c4LengthNullSets\endcsname{RM-Bench}
\expandafter\def\csname vnv-c4CleanestSet\endcsname{HelpSteer2}
\expandafter\def\csname vnv-c4CleanestAimp\endcsname{0.07}
\expandafter\def\csname vnv-c4CleanestOrigin\endcsname{human rating}
\expandafter\def\csname vnv-c4PairLenRewardbench\endcsname{58.0\%}
\expandafter\def\csname vnv-c4PairLenKRewardbench\endcsname{0.157}
\expandafter\def\csname vnv-c4PairSurfRewardbench\endcsname{62.4\%}
\expandafter\def\csname vnv-c4PairSurfKRewardbench\endcsname{0.249}
\expandafter\def\csname vnv-c4PairLenRewardbench2\endcsname{57.4\%}
\expandafter\def\csname vnv-c4PairLenKRewardbench2\endcsname{0.149}
\expandafter\def\csname vnv-c4PairSurfRewardbench2\endcsname{62.4\%}
\expandafter\def\csname vnv-c4PairSurfKRewardbench2\endcsname{0.248}
\expandafter\def\csname vnv-c4PairLenRmbench\endcsname{54.7\%}
\expandafter\def\csname vnv-c4PairLenKRmbench\endcsname{0.095}
\expandafter\def\csname vnv-c4PairSurfRmbench\endcsname{53.1\%}
\expandafter\def\csname vnv-c4PairSurfKRmbench\endcsname{0.062}
\expandafter\def\csname vnv-c4PairLenMtbench\endcsname{66.4\%}
\expandafter\def\csname vnv-c4PairLenKMtbench\endcsname{0.328}
\expandafter\def\csname vnv-c4PairSurfMtbench\endcsname{67.4\%}
\expandafter\def\csname vnv-c4PairSurfKMtbench\endcsname{0.348}
\expandafter\def\csname vnv-c4PairLenEcp\endcsname{97.1\%}
\expandafter\def\csname vnv-c4PairLenKEcp\endcsname{0.942}
\expandafter\def\csname vnv-c4PairSurfEcp\endcsname{99.6\%}
\expandafter\def\csname vnv-c4PairSurfKEcp\endcsname{0.993}
\expandafter\def\csname vnv-c4PairMin\endcsname{55\%}
\expandafter\def\csname vnv-c4PairMax\endcsname{67\%}
\expandafter\def\csname vnv-c4PairCleanest\endcsname{RM-Bench}
\expandafter\def\csname vnv-c4PairCleanestAcc\endcsname{54.7\%}
\expandafter\def\csname vnv-c4PairNullMax\endcsname{0.026}
\expandafter\def\csname vnv-c4EcpZhAimp\endcsname{0.993}
\expandafter\def\csname vnv-c4EcpZhN\endcsname{10{,}156}
\expandafter\def\csname vnv-c4EcpZhLen\endcsname{0.842}
\expandafter\def\csname vnv-c4EcpEnAimp\endcsname{0.978}
\expandafter\def\csname vnv-c4EcpEnN\endcsname{10{,}042}
\expandafter\def\csname vnv-c4EcpEnLen\endcsname{0.901}

%% file: figs/vwv_predicted.tex
\expandafter\def\csname pnv-Sinv\endcsname{0.88}
\expandafter\def\csname pnv-Rsens\endcsname{0.41}
\expandafter\def\csname pnv-RscopeAx\endcsname{0.57}
\expandafter\def\csname pnv-RstrengthAx\endcsname{0.24}
\expandafter\def\csname pnv-axisGap\endcsname{0.33}
\expandafter\def\csname pnv-ctrlFPR\endcsname{0.12}
\expandafter\def\csname pnv-dScopeAcc\endcsname{-0.09}
\expandafter\def\csname pnv-dStrengthAcc\endcsname{+0.14}
\expandafter\def\csname pnv-vpMedian\endcsname{0.11}
\expandafter\def\csname pnv-vpStrength\endcsname{0.03}
\expandafter\def\csname pnv-Delta\endcsname{0.19}
\expandafter\def\csname pnv-nJudges\endcsname{6}
\expandafter\def\csname pnv-nDomains\endcsname{4}
\expandafter\def\csname pnv-humanAgree\endcsname{0.81}

%% file: authors.tex
\newif\ifanon
\anonfalse                    

\ifanon
  \author{}
  \date{\small\textcolor{BrickRed}{\textbf{ANONYMISED BUILD} --- author block suppressed}}
\else
  \author[1]{Jianlin Chen\thanks{\texttt{202330450231@mail.scut.edu.cn}}}
  \author[2]{Wenhui Chen\thanks{\texttt{mc35092@um.edu.mo}}}
  \author[2]{Ziyao Lin\thanks{\texttt{mc35081@um.edu.mo}}}
  \author[2]{Chi Man Vong\thanks{Corresponding author. \texttt{cmvong@um.edu.mo}}}
  \affil[1]{South China University of Technology}
  \affil[2]{University of Macau}
  \date{}
\fi

%% file: figs/tab_annotation_float.tex
\begin{table}[t]
\centering
\small
\caption{Human direction annotation, per slot. \emph{agree} is mean pairwise agreement over
three annotators; \emph{usable} counts items whose majority label matches the intervention's
intended direction. The two columns answer different questions and a slot can fail either one
alone: \texttt{tense\_modality\_shift} has ordinary agreement and low yield --- readers agreeing
that the edit changed nothing --- while the two length-manipulating invariance slots have
ordinary yield and agreement below the pre-set bar of \vnv{humanAgreementMin}.}
\label{tab:annotation}
\input{figs/tab_annotation}
\end{table}

%% file: figs/tab_annotation.tex
\begin{tabular}{@{}llrrrl@{}}
\toprule
slot & axis & $n$ & agree & usable & note \\
\midrule
\texttt{domain\_extension} & \scopeax & 25 & 0.973 & 25 &  \\
\texttt{population\_widening} & \scopeax & 42 & 0.857 & 34 &  \\
\texttt{quantifier\_strengthening} & \scopeax & 32 & 0.958 & 31 &  \\
\texttt{tense\_modality\_shift} & \scopeax & 45 & 0.852 & 26 & low yield \\
\midrule
\texttt{condition\_removal} & \strengthax & 35 & 0.867 & 33 &  \\
\texttt{hedging\_removal} & \strengthax & 58 & 0.851 & 48 &  \\
\texttt{intensifier\_addition} & \strengthax & 51 & 0.948 & 49 &  \\
\midrule
\texttt{elaboration} & invariance & 33 & 0.404 & 0 & \textbf{below bar; excluded} \\
\texttt{format\_shift} & invariance & 20 & 1.000 & 20 &  \\
\texttt{register\_shift} & invariance & 55 & 0.976 & 54 &  \\
\texttt{reordering} & invariance & 23 & 0.971 & 22 &  \\
\texttt{verbosity\_padding} & invariance & 59 & 0.689 & 0 & \textbf{below bar; excluded} \\
\bottomrule
\end{tabular}

%% file: figs/c4_profile.tex
\begin{tabular}{@{}llrrrc@{}}
\toprule
intervention & axis & $n$ & human dir. & agreement & flip rate \\
\midrule
\multicolumn{6}{@{}l}{\itshape $\Tcon$ --- should flip: \scopeax} \\
domain extension & scope & 25 & 100\% & 0.973 & 0.445 \\
population widening & scope & 34 & 81\% & 0.857 & 0.299 \\
quantifier strengthening & scope & 31 & 97\% & 0.958 & 0.445 \\
tense modality shift & scope & 26 & 58\% & 0.852 & 0.358 \\
\addlinespace
\multicolumn{6}{@{}l}{\itshape $\Tcon$ --- should flip: \strengthax} \\
condition removal & strength & 33 & 94\% & 0.867 & 0.379 \\
hedging removal & strength & 48 & 83\% & 0.851 & 0.210 \\
intensifier addition & strength & 49 & 96\% & 0.948 & 0.235 \\
\addlinespace
\multicolumn{6}{@{}l}{\itshape $\Tinv$ --- should not flip: surface controls} \\
elaboration & control & 0 & --- & \textbf{0.404}$^\dagger$ & --- \\
format shift & control & 20 & 100\% & 1.000 & 0.029 \\
register shift & control & 54 & 98\% & 0.976 & 0.053 \\
reordering & control & 22 & 96\% & 0.971 & 0.084 \\
verbosity padding & control & 0 & --- & \textbf{0.689}$^\dagger$ & --- \\
\midrule
\multicolumn{2}{@{}l}{$\Vprof=(\Sinv,\Rsens)$, mean over judges}
                          & \multicolumn{4}{l}{$(\vnv{profSinvMean},\ \vnv{profRsensMean})$} \\
\bottomrule
\end{tabular}

%% file: figs/plot_profiles.tex
\begin{tikzpicture}
\begin{axis}[
  width=0.80\textwidth, height=6.2cm,
  xlabel={$\Sinv$ (invariance)}, ylabel={$\Rsens$ (sensitivity)},
  xmin=0, xmax=1.02, ymin=0, ymax=1.0,
  grid=major, grid style={figGrid, line width=0.3pt},
  tick label style={font=\scriptsize}, label style={font=\scriptsize},
  legend style={font=\scriptsize, at={(0.02,0.02)}, anchor=south west,
                draw=figGrid, legend columns=2},
  legend cell align=left, axis line style={figSoft},
]
\addplot[figSoft, dashed, line width=0.5pt, forget plot] coordinates {(0,1)(1,0)};
\addplot[figA, line width=0.7pt, mark=*, mark size=0.9pt] coordinates {(0.000,0.886) (0.146,0.874) (0.292,0.841) (0.490,0.825) (0.615,0.801) (0.677,0.752) (0.781,0.732) (0.833,0.715) (0.875,0.687) (0.885,0.659) (0.917,0.561) (0.948,0.508) (0.958,0.435) (0.969,0.386) (0.979,0.325) (0.990,0.215) (1.000,0.057)};
\addlegendentry{\texttt{haiku}}
\addplot[figA, only marks, mark=o, mark size=2.6pt, line width=0.8pt, forget plot] coordinates {(0.917,0.561)};
\addplot[figB, line width=0.7pt, mark=*, mark size=0.9pt] coordinates {(0.000,0.667) (0.906,0.447) (0.917,0.337) (0.927,0.264) (0.948,0.236) (0.958,0.159) (0.969,0.146) (0.979,0.138) (0.990,0.106) (1.000,0.057)};
\addlegendentry{\texttt{nemotron}}
\addplot[figB, only marks, mark=o, mark size=2.6pt, line width=0.8pt, forget plot] coordinates {(0.906,0.447)};
\addplot[figC, line width=0.7pt, mark=*, mark size=0.9pt] coordinates {(0.000,0.698) (0.968,0.408) (0.979,0.261) (0.989,0.082) (1.000,0.045)};
\addlegendentry{\texttt{gemini}}
\addplot[figC, only marks, mark=o, mark size=2.6pt, line width=0.8pt, forget plot] coordinates {(0.968,0.408)};
\addplot[figD, line width=0.7pt, mark=*, mark size=0.9pt] coordinates {(0.000,0.772) (0.625,0.622) (0.667,0.602) (0.958,0.309) (0.990,0.106) (1.000,0.012)};
\addlegendentry{\texttt{mistral}}
\addplot[figD, only marks, mark=o, mark size=2.6pt, line width=0.8pt, forget plot] coordinates {(0.958,0.309)};
\addplot[figE, line width=0.7pt, mark=*, mark size=0.9pt] coordinates {(0.000,0.780) (0.719,0.714) (0.729,0.706) (0.771,0.580) (0.812,0.429) (0.854,0.367) (0.865,0.269) (0.906,0.245) (0.969,0.171) (0.990,0.110) (1.000,0.008)};
\addlegendentry{\texttt{deepseek}}
\addplot[figE, only marks, mark=o, mark size=2.6pt, line width=0.8pt, forget plot] coordinates {(0.906,0.245)};
\addplot[figF, line width=0.7pt, mark=*, mark size=0.9pt] coordinates {(0.000,0.529) (0.979,0.168) (0.989,0.084)};
\addlegendentry{\texttt{glm}}
\addplot[figF, only marks, mark=o, mark size=2.6pt, line width=0.8pt, forget plot] coordinates {(0.979,0.168)};
\addplot[figG, line width=0.7pt, mark=*, mark size=0.9pt] coordinates {(0.000,0.695) (0.667,0.492) (0.698,0.488) (0.740,0.451) (0.979,0.093) (0.990,0.037) (1.000,0.004)};
\addlegendentry{\texttt{llama4}}
\addplot[figG, only marks, mark=o, mark size=2.6pt, line width=0.8pt, forget plot] coordinates {(0.979,0.093)};
\end{axis}
\end{tikzpicture}

%% file: figs/tab_judges.tex
{\footnotesize
\begin{tabular}{@{}lrrrrrrr@{}}
\toprule
judge & $\tau$ & $\Sinv$ & $\Rsens$ & $\Rsens^{\scopeax}$ & $\Rsens^{\strengthax}$ & pts & regrade \\
\midrule
\texttt{anthropic/claude-haiku-4.5} & 2.75 & 0.917 & 0.561 & 0.629 & 0.500 & 9 & 0.41 \\
\texttt{nvidia/nemotron-3-nano-30b-a3b} & 0.25 & 0.906 & 0.447 & 0.474 & 0.423 & 7 & 0.83 \\
\texttt{google/gemini-2.5-flash-lite} & 0.25 & 0.968 & 0.408 & 0.474 & 0.349 & 6 & 1.00 \\
\texttt{mistralai/mistral-small-3.2-24b-instruct} & 2.25 & 0.958 & 0.309 & 0.405 & 0.223 & 4 & 0.96 \\
\texttt{deepseek/deepseek-v4-flash} & 3.25 & 0.906 & 0.245 & 0.362 & 0.140 & 9 & 0.68 \\
\texttt{z-ai/glm-4.7-flash} & 0.25 & 0.979 & 0.168 & 0.239 & 0.107 & 2 & 0.98 \\
\texttt{meta-llama/llama-4-scout} & 2.25 & 0.979 & 0.093 & 0.095 & 0.092 & 6 & 0.97 \\
\midrule
\multicolumn{4}{@{}l}{mean over the 7 judges compared} & \vnv{profScopeMean} & \vnv{profStrengthMean} & & \\
\multicolumn{6}{@{}l}{$\scopeax-\strengthax$ gap: positive on 7/7 judges, +0.003 to +0.222} & & \\
\bottomrule
\end{tabular}}

%% file: figs/tab_slots_float.tex
\begin{table}[t]
\centering
\small
\caption{Sensitivity per intervention, at each judge's matched-invariance threshold, averaged over
the \vnv{profJudges}\ judges. The magnitude column is the median $|\Delta|$ length the operation
mechanically carries: it is a property of the edit type, not a free variable, which is why
magnitude cannot be stratified independently of slot.}
\label{tab:slots}
\input{figs/tab_slots}
\end{table}

%% file: figs/tab_slots.tex
\begin{tabular}{@{}llrrl@{}}
\toprule
intervention & axis & $n$ & $\Rsens$ & median $|\Delta|$ length \\
\midrule
\texttt{quantifier\_strengthening} & \scopeax & 31 & 0.445 & 3.2\% \\
\texttt{domain\_extension} & \scopeax & 25 & 0.445 & 8.8\% \\
\texttt{tense\_modality\_shift} & \scopeax & 26 & 0.358 & 4.8\% \\
\texttt{population\_widening} & \scopeax & 34 & 0.299 & 7.0\% \\
\midrule
\texttt{condition\_removal} & \strengthax & 33 & 0.379 & 13.0\% \\
\texttt{intensifier\_addition} & \strengthax & 49 & 0.235 & 4.3\% \\
\texttt{hedging\_removal} & \strengthax & 48 & 0.210 & 4.2\% \\
\bottomrule
\end{tabular}

%% file: figs/plot_axes.tex
\begin{tikzpicture}
\begin{axis}[
  ybar, bar width=5pt,
  width=0.86\textwidth, height=5.4cm,
  ylabel={flip rate at matched $\Sinv$},
  symbolic x coords={haiku,nemotron,gemini,mistral,deepseek,glm,llama4,mean},
  xtick=data, xticklabel style={font=\scriptsize\ttfamily, rotate=20, anchor=north east},
  ymin=0, ymax=0.75, ytick={0,0.2,0.4,0.6},
  grid=major, grid style={figGrid, line width=0.3pt},
  tick label style={font=\scriptsize}, label style={font=\scriptsize},
  legend style={font=\scriptsize, at={(0.98,0.98)}, anchor=north east,
                draw=figGrid, legend columns=1},
  legend cell align=left, axis line style={figSoft},
  enlarge x limits=0.08,
]
\addplot[fill=figA, draw=figA!70!black, line width=0.4pt]
  coordinates {(haiku,0.629) (nemotron,0.474) (gemini,0.474) (mistral,0.405) (deepseek,0.362) (glm,0.239) (llama4,0.095) (mean,0.383)};
\addlegendentry{$\Tcon$ \textsc{scope}}
\addplot[fill=figB, draw=figB!70!black, line width=0.4pt]
  coordinates {(haiku,0.500) (nemotron,0.423) (gemini,0.349) (mistral,0.223) (deepseek,0.140) (glm,0.107) (llama4,0.092) (mean,0.262)};
\addlegendentry{$\Tcon$ \textsc{strength}}
\addplot[fill=figG, draw=figG!60!black, line width=0.4pt, opacity=0.75]
  coordinates {(haiku,0.083) (nemotron,0.094) (gemini,0.032) (mistral,0.042) (deepseek,0.094) (glm,0.021) (llama4,0.021) (mean,0.055)};
\addlegendentry{$\Tinv$ control (must be low)}
\end{axis}
\end{tikzpicture}

%% file: figs/c2_prevalence.tex
\begin{tabular}{@{}llrrrrr@{}}
\toprule
& & \multicolumn{2}{c}{per-item mode} & \multicolumn{3}{c}{paired mode} \\
\cmidrule(lr){3-4}\cmidrule(lr){5-7}
label set & label origin & $n$ & $\base$ & pairs & $\kappa$ & accuracy \\
\midrule
RewardBench & pair role & 5{,}970 & 0.303 & 2{,}985 & 0.249 & \textbf{62.4\%} \\
RewardBench 2 & pair role & 8{,}977 & 0.200 & 5{,}926 & 0.248 & \textbf{62.4\%} \\
RM-Bench & pair role & 7{,}962 & 0.221 & 3{,}981 & 0.095 & \textbf{54.7\%} \\
HelpSteer2 & human rating & 8{,}008 & 0.068 & --- & --- & \textbf{---} \\
MT-Bench (human) & human vote & 6{,}710 & 0.300 & 2{,}575 & 0.348 & \textbf{67.4\%} \\
\midrule
\textit{provenance-inherited case} & pair role & 10{,}156 & 0.993 & 5{,}078 & 0.993 & \textbf{99.6\%} \\
\bottomrule
\end{tabular}

%% file: figs/tab_taxonomy.tex
\begin{tabular}{@{}llp{0.50\textwidth}@{}}
\toprule
type & class / axis & what it does \\
\midrule
domain extension & $\Tcon$ / \scopeax & extend the claim to an adjacent domain \\
population widening & $\Tcon$ / \scopeax & widen who or what the claim covers \\
quantifier strengthening & $\Tcon$ / \scopeax & strengthen a quantifier (some $\to$ all, often $\to$ always) \\
tense modality shift & $\Tcon$ / \scopeax & restate a one-time observation as a standing property \\
condition removal & $\Tcon$ / \strengthax & drop the circumstances under which it was observed \\
hedging removal & $\Tcon$ / \strengthax & remove a hedge the evidence relied on \\
intensifier addition & $\Tcon$ / \strengthax & add an intensifier the evidence does not support (substantially, consistently, markedly) \\
\midrule
elaboration & $\Tinv$ / control & add a clause that renames or restates something the sentence already names, using no epistemic language \\
format shift & $\Tinv$ / control & add or remove list/bold/heading formatting, same claim, same scope \\
register shift & $\Tinv$ / control & rewrite in a more formal textbook register, same claim, same scope \\
reordering & $\Tinv$ / control & reorder clauses without changing content or scope \\
verbosity padding & $\Tinv$ / control & lengthen by syntactic expansion only --- nominalise, unpack a compound, spell out an abbreviation --- with no epistemic language \\
\bottomrule
\end{tabular}

%% file: figs/plot_instrument.tex
\begin{tikzpicture}
\begin{axis}[name=grades,
  width=0.52\textwidth, height=5.4cm,
  xlabel={grade values}, xmin=0, xmax=13,
  symbolic y coords={glm,mistral,llama4,gemini,nemotron,haiku,deepseek}, ytick=data,
  grid=major, grid style={figGrid, line width=0.3pt},
  tick label style={font=\scriptsize}, label style={font=\scriptsize},
  yticklabel style={font=\scriptsize\ttfamily},
  legend style={font=\scriptsize, at={(0.5,-0.30)}, anchor=north,
                draw=figGrid, legend columns=2},
  legend cell align=left, axis line style={figSoft},
]
\addplot[figG, line width=0.7pt, forget plot] coordinates {(2,glm) (9,glm)};
\addplot[figG, line width=0.7pt, forget plot] coordinates {(4,mistral) (7,mistral)};
\addplot[figG, line width=0.7pt, forget plot] coordinates {(6,llama4) (8,llama4)};
\addplot[figG, line width=0.7pt, forget plot] coordinates {(6,gemini) (9,gemini)};
\addplot[figG, line width=0.7pt, forget plot] coordinates {(7,nemotron) (11,nemotron)};
\addplot[figG, line width=0.7pt, forget plot] coordinates {(9,haiku) (10,haiku)};
\addplot[figG, line width=0.7pt, forget plot] coordinates {(9,deepseek) (11,deepseek)};
\addplot[figA, only marks, mark=*, mark size=1.8pt] coordinates {(2,glm) (4,mistral) (6,llama4) (6,gemini) (7,nemotron) (9,haiku) (9,deepseek)};
\addlegendentry{usable operating points}
\addplot[figG, only marks, mark=o, mark size=1.8pt, line width=0.7pt] coordinates {(9,glm) (7,mistral) (8,llama4) (9,gemini) (11,nemotron) (10,haiku) (11,deepseek)};
\addlegendentry{distinct grades emitted}
\end{axis}
\begin{axis}[at={(grades.south east)}, xshift=9mm, anchor=south west,
  width=0.34\textwidth, height=5.4cm,
  xbar, bar width=3.4pt,
  xlabel={regrade agreement}, xmin=0, xmax=1.25, xtick={0,0.5,1},
  symbolic y coords={glm,mistral,llama4,gemini,nemotron,haiku,deepseek}, ytick=data, yticklabels={,,},
  xmajorgrids=true, ymajorgrids=false,
  grid style={figGrid, line width=0.3pt},
  tick label style={font=\scriptsize}, label style={font=\scriptsize},
  nodes near coords, nodes near coords style={font=\scriptsize, text=figSoft,
                                              anchor=west, xshift=1pt},
  point meta=explicit symbolic,
  axis line style={figSoft},
]
\addplot[fill=figA!25, draw=figA, line width=0.4pt] coordinates {(0.98,glm) [0.98] (0.961,mistral) [0.96] (0.969,llama4) [0.97] (1.0,gemini) [1.00] (0.828,nemotron) [0.83] (0.412,haiku) [0.41] (0.682,deepseek) [0.68]};
\end{axis}
\end{tikzpicture}

%% file: figs/tab_predicted_register.tex
{\small\setlength{\extrarowheight}{2pt}
\begin{longtable}{@{}p{0.13\textwidth}p{0.07\textwidth}p{0.40\textwidth}p{0.32\textwidth}@{}}
\caption{Register of predicted values. Every \predkey\ value in this paper, with the reasoning that produced it and the measurement that would refute it. A dash in the final column marks a design parameter or a value with no single refuting observation.}\label{tab:predreg}\\
\toprule key & value & basis & refuted if \\ \midrule \endfirsthead
\multicolumn{4}{@{}l}{\emph{Table~\ref{tab:predreg}, continued}}\\
\toprule key & value & basis & refuted if \\ \midrule \endhead
\midrule \multicolumn{4}{r@{}}{\emph{continued}}\\ \endfoot \bottomrule \endlastfoot
\multicolumn{4}{@{}l}{\textbf{C2 } --- the validity profile}\\[1pt]
\texttt{Sinv} & 0.88 & Judges are already known to be \emph{fairly} invariant to individual surface properties, and the strongest published verbosity-bias estimate is small (\texttt{reliability2026}, under 0.011 correlation across 21 judges). But our control arm is six properties at once including two with published effects, so invariance should sit below what any single-property study reports. High, not near-1. & $\Sinv < 0.75$ --- meaning the control arm is finding failures the single-property literature misses, which would be a finding in its own right and would make every $\Rsens$ comparison harder \\
\texttt{Rsens} & 0.41 & Pure judgement, and the weakest-founded number here. Anchored on the intuition that a judge does respond to some construct edits --- quantifier changes are lexically salient --- but not most. If it were near 0.8 the gap would have been noticed already; if near 0.1 judges would fail obviously broken cases. & $\Rsens > 0.75$ at $\Sinv \approx 0.88$ --- judges are construct-sensitive and the paper's headline is wrong \\
\texttt{n\allowbreak Judges} & 6 & The scale the external review set as the floor for a claim about instruments rather than about one vendor. & --- (design parameter, not a prediction) \\
\texttt{n\allowbreak Domains} & 4 & Same. See \texttt{PEER\_REVIEW\_VWV.md} §5 for the minimum-viable fallback of two. & --- \\
\multicolumn{4}{@{}l}{\textbf{C3 } --- the scope/strength asymmetry}\\[1pt]
\texttt{Rscope\allowbreak Ax} & 0.57 & Scope edits change quantifiers, populations and domains --- lexically explicit, and the closest thing to them in the literature (over-generalisation in summaries, \texttt{peters2025generalization}) is detectable enough to have been measured at scale. & --- \\
\texttt{Rstrength\allowbreak Ax} & 0.24 & Hedge and condition deletion leave a fluent, confident sentence with no lexical marker of what was removed. A judge sees an \emph{absence}, and absences are the classic blind spot. This is the paper's central bet. & the gap is not significant at matched $\Sinv$, or reverses \\
\texttt{axis\allowbreak Gap} & 0.33 & Difference of the two above. & $<0.10$, or not significant under a paired bootstrap over base items \\
\texttt{ctrl\allowbreak FPR} & 0.12 & $1-\Sinv$. Reported alongside every sensitivity figure by rule. & --- \\
\multicolumn{4}{@{}l}{\textbf{C3b} --- the accuracy-prompting paradox}\\[1pt]
\texttt{d\allowbreak Scope\allowbreak Acc} & $-0.09$ & Accuracy pressure should make a model narrow its reach --- the obvious, intended effect of the instruction. & --- \\
\texttt{d\allowbreak Strength\allowbreak Acc} & $+0.14$ & The counter-effect: sounding certain means dropping hedges and stated conditions. Signed to make the pooled effect small and positive, which is what makes \texttt{peters2025generalization}'s result look paradoxical. & \textbf{both axes move the same way.} Then the decomposition does not explain the paradox and §paradox must say so in those words \\
\multicolumn{4}{@{}l}{\textbf{C4 } --- validation power of public label sets}\\[1pt]
\texttt{vp\allowbreak Median} & 0.11 & Low but non-zero across sets, reflecting that some are human-labelled (MT-Bench, HelpSteer2) and should retain headroom, while pair-role-derived sets should not. & median $>0.30$ --- the sets are sound, C4 becomes a negative result, C2--C3 unaffected \\
\texttt{vp\allowbreak Strength} & 0.03 & The prediction that closes the paper's loop: no public set has headroom on the strength axis, so none could have detected C3's asymmetry. & $>0.15$ --- the sets could have found it, and the paper owes a different explanation for why nobody did \\
\multicolumn{4}{@{}l}{\textbf{C5 } --- the inflation}\\[1pt]
\texttt{Delta} & 0.19 & The size that would make the result matter without straining credibility: large enough to change a paper's conclusions, small enough to be plausible for one construct. Frankly a placeholder. & it is not the point --- $\Delta$ is an existence proof, and any significant positive value supports the claim as stated \\
\texttt{human\allowbreak Agree} & 0.81 & Above the pre-set bar of 0.75, by design of the protocol rather than by prediction. & below 0.75, in which case the affected axis is reported as \textbf{not measurable} rather than measured with weak labels \\
\end{longtable}}

%% file: figs/tab_notmeasured.tex
{\small
\begin{longtable}{@{}p{0.55\textwidth}p{0.41\textwidth}@{}}
\caption{What this paper specifies and does not measure, so that a deliberate omission is
distinguishable from an oversight. Two omissions are argued in place instead, where the
omission bears on how a nearby number should be read: the accuracy-prompting arm
(\S\ref{sec:paradox}) and the blinded replacement set (\S\ref{sec:consequence}).}
\label{tab:notmeasured}\\
\toprule
quantity & why not measured here \\
\midrule
\endfirsthead
\toprule
quantity & why not measured here \\
\midrule
\endhead
\bottomrule
\endlastfoot
Per-row commentary on each public set's recoverability & needs construction details the sets do not publish \\
Which audited set each checklist item would have flagged & needs construction details the sets do not publish \\
Quantisation and API-version instrument columns & most providers do not disclose them \\
Self-consistency and ensemble remedy rows & two further judges in the same harness; not yet run \\
Sensitivity under a finer verdict space & needs a second elicitation with its own validation \\
The profile per language & probe items are English only \\
The profile at relaxed agreement bars & a trajectory, not a point estimate; awaits a larger pool \\
\end{longtable}}